\definecolor{dkgreen}{rgb}{0,0.6,0}
\definecolor{gray}{rgb}{0.5,0.5,0.5}
\definecolor{mauve}{rgb}{0.58,0,0.82}
\tiny\color{gray},
\ifcvprfinal\pagestyle{empty}\fi
\newtheorem{definition}{Definition}
\newtheorem{example}{Example}
\newtheorem{proposition}{Proposition}
\newcommand{\paren}[1]{\left( #1 \right)}
\newcommand{\set}[1]{\left\{ #1 \right\}}
\def\final{}
\def\enablecomments{}
\def\enablecomments{}
\definecolor{LightGreen}{rgb}{0.80,1.00,0.80}
\definecolor{LightBlue}{rgb}{0.80,0.80,1.00}
\definecolor{LightRed}{rgb}{1.00,0.80,0.80}
\definecolor{LightPurple}{rgb}{1.00,0.80,1.00}
\definecolor{LightGray}{rgb}{0.90,0.90,0.90}
\soulregister{\method}{7}
\soulregister{\xspace}{7}
\soulregister{\emph}{7}
\soulregister{\cite}{7}
  \DeclareRobustCommand{\commentformat}[3]{\sethlcolor{#2}\textsf{\hl{#1: #3}}}
  \newcommand{\sm}    [1]{{\scriptsize\sethlcolor{LightGray}\hl{\textsf{#1}}}}
  \newcommand{\commentformat}[3]{}
  \newcommand{\sm}    [1]{}
\newcommand{\pxm}   [1]{\commentformat{PM}{LightGreen}{#1}}
\newcommand{\zifan} [1]{\commentformat{ZW}{LightBlue}{#1}}
\begin{document}

\title{Interpreting Interpretations: Organizing Attribution Methods by Criteria}

\author{Zifan Wang, Piotr Mardziel, Anupam Datta, Matt Fredrikson\\
Carnegie Mellon Univeristy\\
Moffett Field, CA, 94089\\
{\tt\small zifanw@andrew.cmu.edu}
}

\maketitle
\begin{abstract}
  Motivated by distinct, though related, criteria, a growing number of attribution methods have been developed to interprete deep learning. While each relies on the interpretability of the concept of ``importance`` and our ability to visualize patterns, explanations produced by the methods often differ.
  As a result, input attribution for vision models fail to provide  any level of human understanding of model behaviour. In this work we expand the foundations of human-understandable concepts with which attributions can be interpreted beyond "importance" and its visualization; we incorporate the logical concepts of necessity and sufficiency, and the concept of proportionality. We define metrics to represent these concepts as quantitative aspects of an attribution. This allows us to compare attributions produced by different methods and interpret them in novel ways: to what extent does this attribution (or this method) represent the necessity or sufficiency of the highlighted inputs, and to what extent is it proportional? We evaluate our measures on a collection of methods explaining convolutional neural networks (CNN) for image classification. We conclude that some attribution methods are more appropriate for interpretation in terms of necessity while others are in terms of sufficiency, while no method is always the most appropriate in terms of both. 
  

  
\end{abstract}


\section{Introduction}
Among
approaches for interpreting opaque models are input attribution which assign to each
model input a level of contribution to its output. When visualized alongside inputs,
an attribution gives a human interpreter some notion of what about the input is important to the
outcome (see, for example, Figure~\ref{fig:example_duck}). Being explanations of highly complex systems intended for highly complex humans, attributions have been varied in their approaches and sometimes produce distinct explanations
for the same outputs.

  \begin{figure}[h!]
  \begin{center}
    \includegraphics[width=\linewidth]{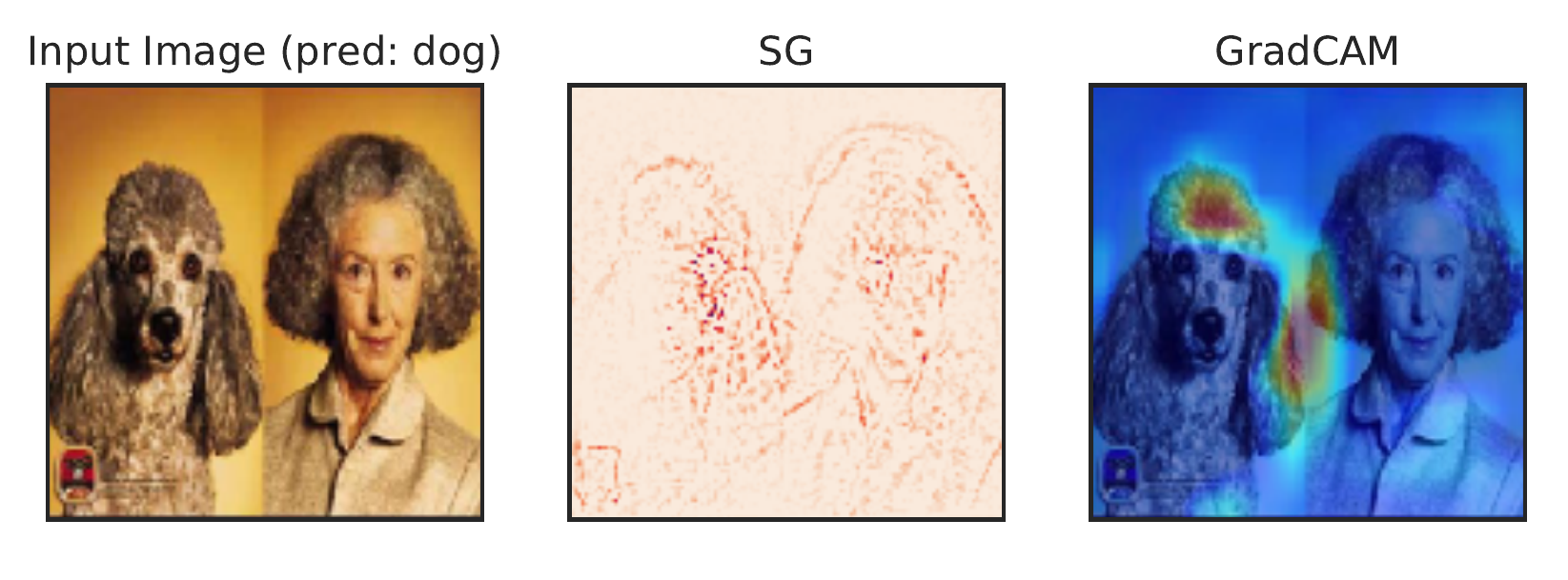}
  \end{center}
  \caption{Interpreting the neural network prediction with different method may come with divergence. Left: the input with both predicted class and groundtruth being \texttt{dog}. Middle: SmoothGrad \cite{smilkov2017smoothgrad}. Pixels with deeper color have higher attribution scores. Right: GradCAM\cite{selvaraju2016gradcam}. Regions with more heat localize the more relevant spatial locations. Questions: \textit{Is the model using the lady to classify the dog? Which interpretation is supposed to be used?} }
  \label{fig:lady_or_dog}
\end{figure}

Nevertheless, save for the earliest approaches, attribution methods distinguish themselves with
one or more desirable criteria. Scaling criteria such as \textit{completeness}~\cite{sundararajan2017axiomatic}, \textit{sensitivity-n}~\cite{ancona2017better}, 
\textit{linear-agreement}~\cite{leino2018influencedirected, sundararajan2017axiomatic} calibrate attribution to the change in output as
compared to change in input when evaluated on some baseline.Given access to different attribution methods, which one is the optimal choice for what purpose remains an unexplored area. Visual comparisons, though intuitive and straightforward, remains less objective since 1) humans' ideas  themselves do not accord at the most of time. 2) attribution maps generated by different methods may vary or even cause contradictory interpretations (see Fig \ref{fig:lady_or_dog} for example).

While evaluation criteria endow attributions with some limited semantics, the variations in design goals, evaluation metrics, and the underlying methods resulted
in attributions failing at their primary goal: aiding in model interpretation. This work alleviates
these problems and makes the following contributions.
\begin{itemize}
  \item{} We decompose and organize existing attribution methods' goals along two complementary
    properties: ordering and proportionality. While ordering requires that an attribution should
    order input features according to some notion of importance, proportionality stipulates also a
    quantitative relationship between a model's outputs and the corresponding attributions in that particular ordering.
  \item{} We describe how all existing methods are motivated by an attribution ordering
    corresponding roughly to the logical notion of necessity which leads to a corresponding
    sufficiency ordering not yet fully discussed in literature.
  \item{} We show that while some attribution methods show great performance in necessity while others show more about sufficiency but no evaluated method in this paper can be a winner on the necessity and sufficiency at the same time.
 
  \item{} We further demonstrate how to interpret different attribution maps to gain more insights about the decision making process in deep models. 
    
\end{itemize}

\section{Background}\label{sec:background}

 Attributions are a simple form of model explanations that have found
significant application to Convolutional Neural Networks (CNNs) with their ease of visualization alongside model inputs
(i.e. images). We summarize the various approaches in Section~\ref{sec:attribution} and the
criteria with which they are evaluated and/or motivated in Section~\ref{sec:criteria}.
\begin{figure*}[th]
  \begin{center}
    \includegraphics[width=0.78\linewidth]{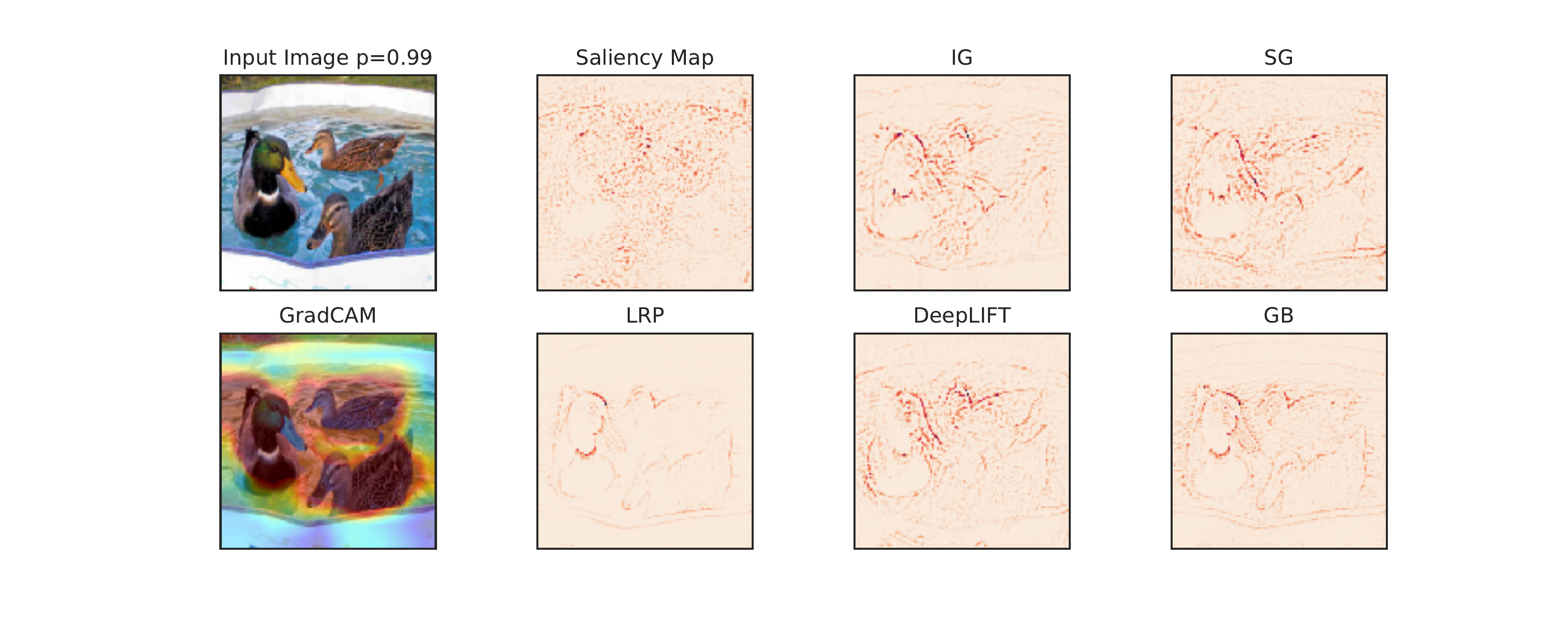}
  \end{center}
  \caption{Visualizations of different attribution methods using VGG16 model
    \cite{simonyan2014deep}. (a) is the input image with an output confidence 0.99 for the true
    label "duck". (b)-(i): different attribution methods discussed in Section \ref{sec:attribution}. \textit{grad} $\times$ \textit{input} is
    applied to (b), (c), (g), (h) and (i) to find attribution scores, while (d), (e) and (f)
    consider their results as attribution scores already. Use heatmap for (d) as the choice in \cite{selvaraju2016gradcam}.}
  \label{fig:example_duck}
\end{figure*} 
\subsection{Attribution Methods}\label{sec:attribution}

The concept of Attribution is well-defined in \cite{sundararajan2017axiomatic} but it excludes any method without an baseline (reference) input. We consider a relaxed version. Consider a classification model $ \mathbf{y} =
M(\mathbf{x}) $ that takes an input vector $ \mathbf{x} $ and outputs a score vector $ \mathbf{y} = [y_0,
  \cdots, y_i, \cdots, y_{n-1}]^{\top} $, where $ y_i $ is the score of predicting $ \mathbf{x} $ as class $
i $ and there are $ n $ classes in total. Given a pre-selected class $ c $, an attribution method
attempts to explain $ y_c $ by computing a score for each feature $x_i$ as its contribution toward $ y_c $. Even though each feature in $ \mathbf{x} $ may receive different attribution scores given different choice of attribution methods, features with positive attribution scores are universally explained as important part in $ \mathbf{x} $, while the negative scores indicate the presence of these features decline the confidence for predicting $ y_c $.

Previous work has made great progress in developing gradient-based attribution methods to highlight important features in the input image for explaining model's prediction. The primary question to answer is whether should we consider \textit{grad} or \textit{grad} $\times$ \textit{input} as attributions \cite{smilkov2017smoothgrad, sundararajan2017axiomatic, shrikumar2017learning, leino2018influence}. As \citet{ ancona2017better} argues \textit{grad} is \textit{local attribution} that only accounts for how tiny change around the input will influence the output of the network but \textit{grad} $\times$ \textit{input} is the \textit{global attribution} that accounts for the marginal effect of a feature towards output. We use \textit{grad} $\times$ \textit{input} as the attribution to be discussed in this paper. We briefly introduce
methods to be evaluated in this paper and examples are provided in Fig \ref{fig:example_duck}.

\newcommand{\methodformat}[1]{\textbf{#1}\xspace}
\newcommand{\methodsm}{Saliency Map\xspace}
\newcommand{\shortsm}{SM\xspace}
\newcommand{\methodgb}{Guided Backpropagation\xspace}
\newcommand{\shortgb}{GB\xspace}
\newcommand{\methodgc}{GradCAM\xspace}
\newcommand{\shortgc}{GCAM\xspace}
\newcommand{\methodlrp}{Layer-wise Relevance Propagation\xspace}
\newcommand{\shortlrp}{LRP\xspace}
\newcommand{\methoddl}{DeepLift\xspace}
\newcommand{\shortdl}{DL\xspace}
\newcommand{\methodig}{Integrated Gradient\xspace}
\newcommand{\shortig}{IG\xspace}
\newcommand{\methodsg}{SmoothGrad\xspace}
\newcommand{\shortsg}{SG\xspace}
\newcommand{\methodid}{Influence-Directed Explanation\xspace}
\newcommand{\shortid}{ID\xspace}
\newcommand{\methodcam}{Class Activation Map\xspace}
\newcommand{\shortcam}{CAM\xspace}

\methodformat{\methodsm} (SM) \cite{simonyan2013deep, baehrens2009explain} uses the gradient of the class of interests with
respect to the input to interpret the prediction result of CNNs. \methodformat{\methodgb} (\shortgb) \cite{springenberg2014striving} modifies the
backpropagation of ReLU~\cite{pmlr-v15-glorot11a} so that only the positive gradients will be
passed into the previous layers. \methodformat{\methodgc} \cite{selvaraju2016gradcam} builds on the \methodcam (\shortcam)
\cite{zhou2015cnnlocalization} targeting CNNs. Although its variations
\cite{Chattopadhay_2018,omeiza2019smooth} show sharper visualizations, their fundamental concepts
remain unchanged. We consider only \methodgc in this paper. \methodformat{\methodlrp} (\shortlrp) \cite{Bach2015OnPE}, \methodformat{\methoddl} \cite{shrikumar2017learning} modifies the local gradient and rules of backpropagations. Another method sharing similar motivation in design with \methoddl is \methodformat{\methodig} (\shortig)
\cite{sundararajan2017axiomatic}. \shortig computes attribution by integrating the gradient over a
path from a pre-defined baseline to the input. \methodformat{\methodsg} (\shortsg) \cite{smilkov2017smoothgrad} attempts to denoise the result of \methodsm by adding Gaussian noise to the input and provides visually sharper results.

Other methods like Deep Taylor Decompostion~\cite{montavon2015explaining} related with LRP, Occluding~\cite{zeiler2013visualizing} and Influence Directed Explanations \cite{leino2018influencedirected} are not evaluated in this paper but will be a proper future work to discuss.

\subsection{Assumptions}
We restrict ourselves with two assumptions with regards to models and attribution methods analyzed.

\noindent \textbf{Non-linearity}
We focus on evaluating the performance of attribution methods on non-linear model, \eg neural networks, as SM, IG, SG, LRP, and DeepLIFT are equivalent for linear models (see proofs in Appendix \textcolor{red}{I}) while GradCAM only works for convolutional layers. Linear models are therefore not expected to distinguish most attribution methods.

\noindent \textbf{Feature Interaction}
Features may or may not influence the decision individually. In this paper, we focus on attribution methods that are not directly suited to reasoning about feature interaction: their attribution maps represent per-pixel importance, and do not indicate relationships between pixels. We are interested in evaluating the feature interactions in the future work.

\pxm{TODO: update this}

\subsection{Evaluation Criteria}\label{section:2}\label{sec:criteria}

Evaluation criteria measure the performance of attribution methods towards some desirable
characteristic and are typically employed to justify the use of novel attribution methods. We begin with discussing two assumptions about evaluating the attribution methods. 

The most common evaluations are based on pixel-level interventions or perturbations. These quantify the
correlation between the perturbed pixels' attribution scores and the output
change~\cite{unifyarticle, alex2016layerwise, Chattopadhay_2018, gilpin2018explaining,
  Montavon_2018, 7552539, shrikumar2017learning}. For perturbations that intend to remove or ablate
a pixel (typically by setting it to some baseline or to noise), the desired behavior for an optimal
attribution method is to have perturbations on the highly attributed pixels drop the class score
more significantly than on the pixels with lower attribution.

Quantification of the behavior described by \citet{7552539} with \emph{Area Over Perturbation Curve
  (AOPC)} measures the area between two curves: the model's output score against the number
of perturbed pixels in the input image and the horizontal line of the score at the model's original
output. Two similar measurement are \emph{Area Under Curve (AUC)} \cite{alex2016layerwise, Montavon_2018} and \emph{MOst Relevant features First (MoRF)} \cite{ancona2017better} that measure the area under the perturbation curve instead. AOPC and AUC (we use AUC to represent both AUC and MoRF) measurement are equivalent and both are orignally used to endorse \shortlrp. For reasons which will become clear in the next section, we
categorize these criteria as supporting necessity order. We argue that evaluating attribution methods only with perturbation curves, \eg \emph{Area Under Curve (or AUC)}, only discovers the tip of the iceberg and potentially can be problematic. A toy model is shown in Example \ref{example:max} to elaborate our concerns.




\begin{example}\label{example:max}
  Consider a model $M(\mathbf{x}) = max(x_1, x_2) $ that takes a vector $\mathbf{x}$ with three
  features $x_1, x_2, x_3 \in \{0, 1\}$. Given the input to the model is $x_1=x_2=x_3=1$, assume
  $A_1, A_2, A_3$ are three different methods and output the attribution scores $s_1, s_2, s_3$
  shown in Table \ref{tab:example1} for each input feature $x_1, x_2, x_3$, respectively.
  \begin{table}[h]
    \centering
    \begin{tabular}{cccc}
      & $s_1$ & $s_2$ & $s_3$ \\ \hline
$A_1$ & $1/6$ & $1/3$ & $1/2$ \\
$A_2$ & $2/3$ & $0$   & $1/3$ \\
      $A_3$ & $2/3$ & $1/3$ & $0$   \\
    \end{tabular}
    \caption{$s_1, s_2, s_3$ are attribution scores for $x_1, x_2, x_3$ computed by $A_1, A_2, A_3$, respectively.}
    \label{tab:example1}
  \end{table}{}
  \begin{figure}[h!]
    \centering
    \includegraphics[width=0.8\linewidth]{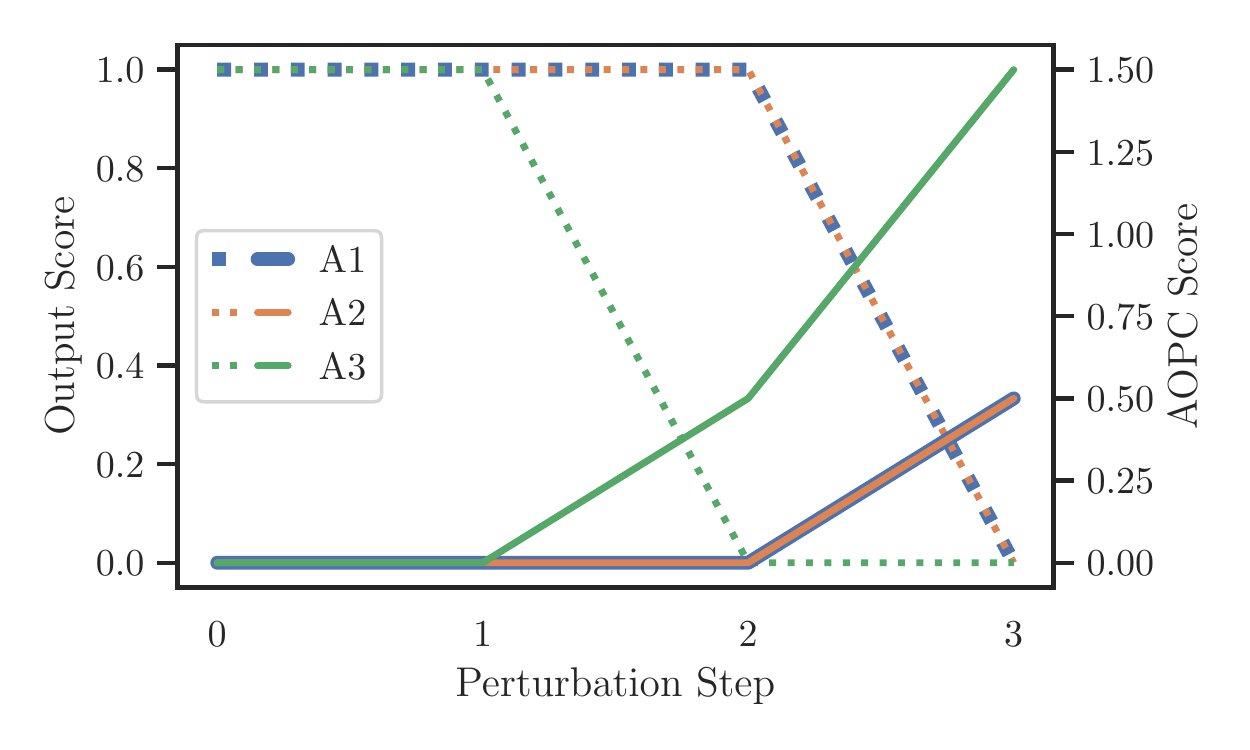}
    \caption{Comparing attribution methods $A_1, A_2$ and $A_3$ by applying zero perturbation. Dash
      lines are the change of model's output at each perturbation step (only one feature is set to 0 at each step). Solid lines
      are the changes of AOPC scores. $A_2$ and $A_3$ are overlapping with each other in this
      exmaple.}
    \label{fig:example2_figure}
  \end{figure}
  
  We apply zero perturbation to the input which means we set features to 0. The AOPC evaluation for
  these three attribution methods is shown in Fig \ref{fig:example2_figure}. Using the conclusion
  from \cite{7552539} that higher AOPC scores suggest higher relativity of input features
  highlighted by an attribution method, Fig \ref{fig:example2_figure} shows pixels highlighted by $A_3$ are more relative \citet{7552539} to prediction than $A_1$ and $A_2$, as expected. However, $A_2$ and $A_1$ are considered as showing
  same level of relativity under the AOPC measurement even though $A_2$ succeeds in discovering $a$ is more
  relevant than $c$, whereas $A_1$ believes $c$ is more relevant than both $a, b$.
\end{example}

Another set of criteria instead stipulate that positively attributed features should stand on their
own independently of non-important features. An example of this criterion is \emph{Average \% Drop} \cite{Chattopadhay_2018} in support of GradCam++ that measures the change of class score by
presenting only pixels highlighted by an attribution only (non-important pixels are ablated). Another example is the \emph{LEast Relevant Features first (LeRF)} \cite{ancona2017better} that removes the features with least high attribution scores. first. We will say these two criteria
support sufficiency order (definition to follow). 

Rethinking the concept of relativity, we
believe both necessity and sufficiency can be treated as different types of relativity. In
Example~\ref{example:max}, neither $ x_1 $ nor $ x_2 $ is a necessary feature individually because the output will not change if any one of them is absent. However, both $ x_1 $ and $ x_2 $ are sufficient features, with either of which, the model could produce the same output as before. Besides, $ A_2 $ succeeds in placing the order of sufficient feature $ x_1 $ in front of the non-sufficient feature $ x_3 $ but $ A_2 $ fails, while AOPC(or AUC) is unable to discover the success. 

Other evaluation criteria exist, like \textit{sensitivity-n}~\cite{ancona2017better} and sanity check \cite{adebayo2018sanity}, will be discussed in Section \ref{sec:related work}.
\section{Methods}\label{sec:methods}

To tame the zoo of criteria, we organize and decompose them into two
aspects: (1) \textit{ordering} imposes conditions under which an input should be more important than another
input in a particular prediction, and (2) \textit{proportionality} further specifies how attribution should
be distributed among the inputs. We elaborate on ordering criteria in Section~\ref{sec:order} with
instantiating in Section~\ref{sec:order-necessity} and Section~\ref{sec:order-sufficiency}. We
describe proportionality in Section~\ref{sec:proportionality}. We begin with the logical notions of
necessity and sufficiency as idealized versions of ablation-based measures described in
Section~\ref{sec:background}. We introduce our notations in this paper before any further discussion,
\subsection{Notation}  
\label{sec:notation} 
Consider a model $y = f(\mathbf{x})$ and an attribution method $A$, it computes a set of
attribution scores $s_1, s_2, ..., s_n$ for each pixel $x_1, x_2, ..., x_n$ in the input image $\mathbf{x}$ attributing a given class \footnote{we omit the notation of the class of interest for the simplicity in the rest of the paper}.
We permute the pixels into a new ordering $\pi_A(\mathbf{x}) = [x'_1, x'_2, ..., x'_n]$ so that $s'_1 \geq
s'_2 \geq ... \geq s'_n $. We take the subset $\pi^+_A(\mathbf{x})$ of $\pi_A(\mathbf{x})$ so that $\pi^+_A(\mathbf{x})$ has
the same ordering as $\pi_A(\mathbf{x})$ but only contains pixels with positive attribution scores. Let
$R_i(\mathbf{x}, \pi)$ be the output of the model with input $\mathbf{x}$ where pixels $x'_1, x'_2, ..., x'_i \in
\pi$ are perturbed from the input by setting $x'_1 = x'_2 = ... = x'_i = b$, where $ b $ is a
baseline value for the image (typically $b=0$). Also, let $\mathbf{x}_b$ be the the baseline input image where all the pixels are filled with the baseline value $b$. Therefore, we have the the original output $y_0 = f(\mathbf{x})$ and the baseline output $y_b=f(\mathbf{x}_b)$.

\subsection{Logical Order}\label{sec:order}
The notions of necessity and sufficiency are commonly used characterizations of logical conditions. A
necessary condition is one without which some statement does not hold.
For example, in the statement $ P_1 = A \wedge B $, both $ A $ and $ B $ are necessary
conditions as each independently would invalidate the statement were they be made false. On the
other hand, a sufficient condition is one which can independently make a statement true without other conditions being true. 
In more complex statements, no atomic condition may be necessary nor sufficient though compound
conditions may. In the statement $ P_3 = (A \wedge B) \vee (C \wedge D) $, none of $ A, B, C, D $
are necessary nor sufficient but $ (A \wedge B) $ and $ (C \wedge D) $ are sufficient. As we are
working in the context of input attributions, we relax and order the concept of necessity and
sufficiency for atomic conditions (individual input pixels).

\begin{definition}[Logical Necessity Ordering]\label{def:logical-necessity}
  Given a statement $ P $ over some set of atomic conditions, and two orderings $ a $
  and $ b $, both ordered sets of the conditions, we say $ a $ has better necessity ordering for
  $ P $ than $ b $ if:
  \begin{equation}
   \min_i\paren{ \set{a_k}_{k \geq i} \cancel{\models} P } \leq \min_i \paren{ \set{b_k}_{k \geq
    i} \cancel{\models} P}       
  \end{equation}

\end{definition}
\begin{definition}[Logical Sufficiency Ordering]\label{def:logical-sufficiency}
  Likewise, $ a $ has better sufficiency ordering for $ P $ than $ b $ if:
  \begin{equation}
         \min_i \paren{\set{a_k}_{k \leq i} \models P} \leq \min_i \paren {\set{b_k}_{k \leq i} \models
    P} 
  \end{equation}
\end{definition}
A better necessity ordering is one that invalidates a statement $ P $ by removing the shorter prefix of the ordered conditions while a better sufficiency ordering is the one that can validate a statement using the shorter prefix.

\subsection{Necessity Ordering (N-Ord)}\label{sec:order-necessity}

Unlike logical statements,
numeric models do not have an exact notion of a condition (feature) being present or not. Instead,
inputs at some baseline value or noise are viewed as having a feature removed from an input. Though
this is an imperfect analogy, the approach is taken by every one of the measures described in
Section~\ref{sec:background} that make use of perturbation in their motivation. Additionally, with numeric outputs, the nuances in output obtain magnitude and we
can longer describe an attribution by a single index like the minimal index of
Definitions~\ref{def:logical-necessity} and \ref{def:logical-sufficiency}. Instead we consider an
ideal ordering as one which drops the numeric output of the model the most with the least number of inputs
ablated. 

We refer the AUC measurement \cite{alex2016layerwise,
  Montavon_2018, } and MoRF \cite{ancona2017better} as means to measure the Necessity Ordering (N-Ord). Denote $N_o(\mathbf{x}, A)$ as N-Ord score given a input image $\mathbf{x}$ and an attribution method $ A $. Rewrite AUC using the notation in Section \ref{sec:notation}:
  \begin{equation}
    N_o(\mathbf{x}, A) =\frac{1}{M+1}\sum^M_{m=0}R^m_0(\mathbf{x}, A)\label{eq:1}
  \end{equation}
where $R^m_0(\mathbf{x}, A) = max\{R_m(\mathbf{x}, \pi^+_A(\mathbf{x}))-y_b, 0\}$ and $M$ is the total number of pixels in $\pi^+_A(X)$. We include $max$ to clip scores below the baseline output. According to Definition \ref{def:logical-necessity}, we have the following proposition. 
\begin{proposition}
  An attribution method $A_1$ shows a (strictly) better Ordering Necessity than another method
  $A_2$ given an input image $\mathbf{x}$ if $N_o(\mathbf{x}, A_1) < N_o(\mathbf{x}, A_2)$ \label{property: NO}
\end{proposition}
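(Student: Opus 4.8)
The plan is to show that $N_o$ is a faithful numeric relaxation of the minimal-index quantity in Definition~\ref{def:logical-necessity}, so that the real-valued comparison $N_o(\mathbf{x}, A_1) < N_o(\mathbf{x}, A_2)$ reduces, in the idealized logical setting, to the index comparison that defines a strictly better necessity ordering. First I would make the correspondence between the perturbation curve and the logical suffix explicit: perturbing the top $m$ pixels $x'_1, \dots, x'_m$ of the ordering $\pi^+_A(\mathbf{x})$ leaves $x'_{m+1}, \dots$ present, which is exactly the suffix $\set{a_k}_{k \geq m+1}$ appearing in the definition. Thus ``removing the shortest prefix that invalidates $P$'' translates to ``the smallest number of top-attributed ablations at which the model output reaches its baseline $y_b$.''

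Next I would specialize to the binary/logical limit to anchor the claim. Take $P$ to be the Boolean event that the model predicts the class, so $y_0 = f(\mathbf{x}) = 1$ and $y_b = 0$, and let $j^*_A = i^*_A - 1$ where $i^*_A = \min_i \paren{\set{a_k}_{k \geq i} \cancel{\models} P}$ is the necessity index of Definition~\ref{def:logical-necessity}. Under the (logically natural) monotonicity that once $P$ is invalidated further ablation keeps it invalidated, the curve $R_m(\mathbf{x}, \pi^+_A(\mathbf{x}))$ is a step dropping from $1$ to $0$ at $m = j^*_A$, so $R^m_0 = \max\set{R_m - y_b, 0}$ equals $1$ for $m < j^*_A$ and $0$ otherwise. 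Summing the $j^*_A$ nonzero terms gives $N_o(\mathbf{x}, A) = j^*_A / (M+1)$, a strictly increasing affine function of the necessity index $i^*_A$.

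The proposition then follows directly: $N_o(\mathbf{x}, A_1) < N_o(\mathbf{x}, A_2)$ forces $j^*_{A_1} < j^*_{A_2}$, hence $i^*_{A_1} < i^*_{A_2}$, which is precisely the (strict) condition of Definition~\ref{def:logical-necessity} for $A_1$ to have a strictly better necessity ordering for $P$ than $A_2$. For genuine non-binary models I would argue that $N_o$ extends this correspondence continuously: the step function is replaced by the above-baseline residual $\max\set{R_m - y_b, 0}$, whose average is the natural real-valued surrogate for ``how short a prefix suffices to collapse the output,'' so a smaller area still certifies an earlier, faster collapse and therefore a better necessity ordering.

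The main obstacle is exactly this passage from the clean binary step to the general numeric curve, and two points need care. First, monotonicity of $R_m$ in $m$ can fail for real networks, since ablating additional pixels may transiently raise the output; the area comparison therefore certifies the logical ordering only under a monotonicity (or single-crossing) assumption, which I would state explicitly as the hypothesis making the equivalence exact. Second, the normalizer $1/(M+1)$ depends on the number $M$ of positively attributed pixels, which may differ between $A_1$ and $A_2$; to keep the comparison well-posed I would either fix a common horizon $M = \max(M_1, M_2)$ (padding the shorter curve at the baseline) or restrict the strict-inequality claim to the idealized regime where $M$ is shared.
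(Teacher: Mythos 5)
The paper offers no proof of this proposition at all: it is stated immediately after the sentence ``According to Definition~\ref{def:logical-necessity}, we have the following proposition,'' and functions as a stipulative definition of what ``better Ordering Necessity'' means in the numeric setting, justified only by informal analogy with the logical definition. Your proposal therefore does strictly more work than the paper. Your reduction in the binary limit is correct: with a monotone step curve dropping at $m = j^*_A = i^*_A - 1$, the sum collapses to $N_o(\mathbf{x},A) = j^*_A/(M+1)$, so the real-valued comparison recovers the index comparison of Definition~\ref{def:logical-necessity} exactly. More importantly, the two caveats you isolate are genuine and are precisely why the statement cannot be a theorem in the general numeric case: the perturbation curve $R_m$ need not be monotone for a real network, and the normalizer $1/(M+1)$ ties the score to the number of positively attributed pixels, which differs across methods, so two $N_o$ values are not measured on a common scale. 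The paper silently elides both issues. Your explicit hypotheses (monotonicity or single-crossing, and a shared or padded horizon $M$) are exactly the conditions under which the proposition becomes a provable claim rather than a convention, so your treatment is both consistent with the paper's intent and more honest about its scope.
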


As discussed in Section
\ref{section:2}, N-Ord only captures whether more necessary pixels, are receiving higher attribution scores. We argue that attribution methods should also be
differentiated by the ability of highlighting sufficient features. To evaluate whether more sufficient pixels are receiving higher attribution scores, we propose
Sufficiency Ordering as a complementary measurement. 

\subsection{Sufficiency Ordering (S-Ord)}\label{sec:order-sufficiency}
We believe LeRF \cite{ancona2017better} is a related means of measuring the Sufficiency. Sufficiency Ordering measures the score increase as we keep adding important features into a baseline input. Use the notation in Section \ref{sec:notation} and let $ R'_i(\mathbf{x}_b, \pi) $ be the model's output with $ \mathbf{x}_b $
where $x'_1, x'_2, \cdots , x'_i \in \pi $ are added to the baseline image $\mathbf{x}_b$. Denote $S_o(\mathbf{x}, A)$ as S-Ord score given a input image $\mathbf{x}$ and an attribution method $ A $.
\begin{equation}
    S_o(\mathbf{x}, A) =\frac{1}{M+1}\sum^M_{m=0}R^{m'}_0(\mathbf{x}, A) \label{eq:So}
\end{equation}
where $R^{m'}_0(\mathbf{x}, A) = min\{R'_m(\mathbf{x}_b, A), R'_M(\mathbf{x}_b, \pi^+_A(\mathbf{x}))\}-y_0$, $M$ is the number of pixels in $\pi^+_A(\mathbf{x})$. We include $min$ to clip scores above the original output. According to Definition \ref{def:logical-sufficiency}, we have the following proposition. 
\begin{proposition}
\label{property 2}
  An attribution method $A_1$ shows (strictly) better Ordering Sufficiency than another method
  $A_2$ given an input image $X$ if $S_o(\mathbf{x}, A_1) > S_o(\mathbf{x}, A_2)$.
\end{proposition}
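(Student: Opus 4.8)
The plan is to show that $S_o$ is the quantitative surrogate for the logical sufficiency ordering of Definition~\ref{def:logical-sufficiency}, mirroring the necessity argument behind Proposition~\ref{property: NO} with the inequality reversed. First I would fix the numeric analog of the logical clause ``$\set{a_k}_{k \leq i} \models P$''. In the continuous setting a prefix of added pixels validates the prediction to the extent that the reconstruction output $R'_m(\mathbf{x}_b, \pi^+_A(\mathbf{x}))$ attains the saturated value $R'_M(\mathbf{x}_b, \pi^+_A(\mathbf{x}))$ --- the most the positively attributed pixels can recover --- so I would define the sufficiency index $i^\star_A$ of a method $A$ as the smallest $m$ at which the clipped curve $\min\set{R'_m, R'_M}$ reaches $R'_M$. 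By Definition~\ref{def:logical-sufficiency}, $A_1$ has a strictly better sufficiency ordering than $A_2$ precisely when $i^\star_{A_1} < i^\star_{A_2}$, i.e. $A_1$ reconstitutes the output with a strictly shorter prefix of its positively attributed pixels.

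Next I would evaluate $S_o$ on the logical idealization in which this reconstruction curve is a step: the output sits at the baseline $y_b$ for every prefix that fails to saturate and jumps to $R'_M$ once the sufficient prefix is completed at $i^\star_A$. Substituting the step into $R^{m'}_0(\mathbf{x}, A) = \min\set{R'_m, R'_M} - y_0$ and counting the $i^\star_A$ terms below the jump and the $M+1-i^\star_A$ terms above it gives
\[
S_o(\mathbf{x}, A) = \frac{1}{M+1}\paren{i^\star_A(y_b - y_0) + (M+1-i^\star_A)(R'_M - y_0)}.
\]
Since $R'_M \geq y_b$, the plateau coefficient $R'_M - y_0$ is at least the pre-jump coefficient $y_b - y_0$, so $S_o$ is nonincreasing in $i^\star_A$, and strictly decreasing whenever the positive pixels genuinely raise the output, $R'_M > y_b$. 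Reading this in the required direction, $S_o(\mathbf{x}, A_1) > S_o(\mathbf{x}, A_2)$ forces $i^\star_{A_1} < i^\star_{A_2}$, which is exactly the strictly better sufficiency ordering of Definition~\ref{def:logical-sufficiency}. In this regime the two step curves are also pointwise comparable --- the earlier jump dominates at every $m$ --- so the conclusion is stable under the $\min$-clipping.

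The main obstacle will be bridging the crisp logical predicate, which has a sharp first-satisfaction index, to the true output $R'_m$, which is continuous and need not be monotone in $m$: a later, lower-attributed pixel can in principle lift the output more than an earlier one, so no single saturation index need exist. I expect to resolve this by treating $S_o$ itself as the natural relaxation of $i^\star_A$ --- replacing the discrete first-satisfaction index with the integrated gap between the clipped curve and its saturation plateau --- and by invoking the idealization implicit in every perturbation-based measure of Section~\ref{sec:background}, namely that adding higher-attributed pixels first drives the output up at least as fast, which makes the clipped reconstruction curve effectively monotone and restores the pointwise-domination step. Without that idealization the proposition should be read as the quantitative analog of, rather than a strict logical equivalent to, Definition~\ref{def:logical-sufficiency}, exactly as $N_o$ relaxes the necessity index in Proposition~\ref{property: NO}.
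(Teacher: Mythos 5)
The paper never actually proves this proposition: it is stated as an immediate stipulation following Definition~\ref{def:logical-sufficiency} (``According to Definition 2, we have the following proposition''), and the appendix contains proofs only for the linear-model equivalence and for Proposition~\ref{pro:1}. In effect the paper treats ``better Ordering Sufficiency'' as being \emph{defined} by the inequality $S_o(\mathbf{x}, A_1) > S_o(\mathbf{x}, A_2)$, with Definition~\ref{def:logical-sufficiency} serving only as informal motivation. Your proposal therefore supplies an argument the paper omits, and as a reconstruction of the intended motivation it is sound: under the step-curve idealization your formula $S_o(\mathbf{x}, A) = \frac{1}{M+1}\paren{i^\star_A(y_b - y_0) + (M+1-i^\star_A)(R'_M - y_0)}$ is computed correctly (there are exactly $i^\star_A$ pre-jump terms and $M+1-i^\star_A$ plateau terms), and monotonicity in $i^\star_A$ follows from $R'_M \geq y_b$. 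You are also right, and commendably explicit, that without the monotone-reconstruction idealization the claim is a quantitative analog rather than a logical consequence of Definition~\ref{def:logical-sufficiency}; this is exactly the status it has in the paper.

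One concrete gap remains even inside your idealization: your monotonicity argument treats $S_o$ as a function of $i^\star_A$ alone, but the two methods being compared generally have different positive-pixel sets, hence different $M$ and different saturation levels $R'_{M}$. A larger $S_o(\mathbf{x}, A_1)$ could then be produced by a higher plateau $R'_{M_1} > R'_{M_2}$ or a smaller normalizing count $M_1+1$ rather than by an earlier jump, so the implication $S_o(\mathbf{x}, A_1) > S_o(\mathbf{x}, A_2) \Rightarrow i^\star_{A_1} < i^\star_{A_2}$ only holds if you additionally assume both methods share the same $M$ and the same recovered output $R'_M$ (e.g., both eventually restore the full prediction, $R'_M \approx y_0$). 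Stating that normalization assumption would close the argument; as written, the step from the explicit formula to the final implication quantifies over the wrong thing. Since the paper itself offers no proof, this does not contradict anything in the text, but it is the one place where your derivation needs an extra hypothesis to go through.
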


N-Ord and S-Ord together provides a more
comprehensive evaluation for an attribution method. In Section \ref{Sec: Proportionality}, we are
going to discuss the disadvantages of only using N-Ord or S-Ord and propose Proportionality as a refinement to the ordering analysis.

\subsection{Proportionality} \label{Sec: Proportionality}\label{sec:proportionality}
N-Ord and S-Ord do not incorporate the attribution scores beyond producing an ordering. This can be an issue toward an accurate description of feature necessity or sufficiency. For example, consider a toy model $M(x_1, x_2) = 2x_1 + x_2$ and let the inputs variables be $x_1=x_2=1$. Any attribution methods that assign higher score for $x_1$ than $x_2$ produces the identical ordering $\pi(x_1, x_2) = [x_1, x_2]$, even one could overestimate the degree of necessity (or sufficiency) of $x_1$ by assigning it with much higher attribution scores. With \textit{linear agreement} \cite{leino2018influence}, scores for $x_1$ and $x_2$ are more reasonable if their ratio is close to 2:1. Explaining a decision made by a more complex model only using ordering of attributions may overestimate or underestimate the necessity (or sufficiency) of an input feature. Therefore, We propose Proportionality as a refinement to quantify the necessity and sufficiency in complementary to the ordering measurement.

\theoremstyle{definition}
\begin{definition}[Proportionality-k for Necessity]
\label{def: proportionality-k for necessity}

Consider two positive number $n_1, n_2$ and an attribution method A. Use notations in Section \ref{sec:notation} and let $\hat{\pi}^+_A(\mathbf{x})$ be a reversed ordering of $\pi^+_A(\mathbf{x})$. Proportionality-k for Necessity is measured by
\begin{equation}
    N^k_p(\mathbf{x}, A) = |R_{n_1}(\mathbf{x}, \pi^+_A(\mathbf{x})) - R_{n_2}(\mathbf{x}, \hat{\pi}^+_A(\mathbf{x}))| \label{eq:2}
\end{equation}
under the condition $\sum^{n_1}_i s_i = \sum^{n_2}_j s_j = k S(A, \mathbf{x}), s_i \in \pi^+_A(\mathbf{x}), s_j \in \hat{\pi}^+_A(\mathbf{x}), k\in[0, 1]$. $R_i(\mathbf{x}, \pi)$ uses the same definition in \eqref{eq:1}, and $S( \mathbf{x}, A)$ is the sum of total
positive attribution scores.
\end{definition}



\textbf{Explanation of Definition \ref{def: proportionality-k for necessity}} the motivation behind Proportionality-k for Necessity is that: given a group of pixels ordered with their attribution scores, there are different ways of distributing scores to each feature while the ordering remains unchanged. An optimal assignment is preferred that features receive attribution scores proportional to the output change if they are modified accordingly. In other words, given any two subsets of pixels $\pi_1$ and $\pi_2$. with total attribution scores sum to $S_1$ and $S_2$, are perturbed, the change of output scores $R(\mathbf{x}, \pi_1)$ and $R(\mathbf{x}, \pi_2)$ should satisfy $R(\mathbf{x}, \pi_1)/R(\mathbf{x}, \pi_2) = S_1/S_2$. This property is demanded because the same share of attribution scores should account for the same necessity or sufficiency. If we restrict the condition to $S_1 = S_2$, the difference between $R(\mathbf{x}, \pi_1)$ and $R(\mathbf{x}, \pi_2)$ becomes an indirect measurement of the proportionality. For the measurement of Necessity, we further restrict that $\pi_1$ is perturbed from the pixel with the highest attribution score first and $\pi_2$ is perturbed from the one with lowest attribution score first, in accordance with the setup in N-Ord. Therefore, a smaller difference $N^k_p(\mathbf{x}, A)$ shows better Proportionality-k for Necessity

\begin{proposition}
  An attribution method $A_1$ shows better Proportionality-k for Necessity than method $A_2$ if $N^k_p(\mathbf{x}, A_1) < N^k_p(\mathbf{x}, A_2)$  \label{property: proportionality-k for necessity}
\end{proposition}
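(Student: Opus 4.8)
The plan is to treat this proposition exactly as the earlier N-Ord and S-Ord propositions: the quantity $N^k_p(\mathbf{x}, A)$ is built so that its numerical ordering coincides, by construction, with the qualitative notion of proportionality it is meant to capture, and the proof amounts to verifying that alignment rather than deriving a nontrivial inequality. Concretely, I would first make precise what \emph{perfect} Proportionality-k for Necessity means: an attribution whose equal shares of score account for equal necessity, so that whenever two prefixes of the positively-attributed pixels carry the same total mass $k\,S(\mathbf{x}, A)$, perturbing either prefix induces the same drop in the model output. Under this ideal the two terms in \eqref{eq:2} coincide, $R_{n_1}(\mathbf{x}, \pi^+_A(\mathbf{x})) = R_{n_2}(\mathbf{x}, \hat{\pi}^+_A(\mathbf{x}))$, and hence $N^k_p(\mathbf{x}, A) = 0$.

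Next I would observe that $N^k_p(\mathbf{x}, A)$ is an absolute value and is therefore nonnegative for every method, vanishing exactly at the ideal just described (restricted to the extreme highest-first and lowest-first prefixes selected by $n_1$ and $n_2$). This identifies $N^k_p$ as a nonnegative measure of the \emph{deviation} of a method from proportionality at mass level $k$. Reading \qm{better Proportionality-k for Necessity} as \qm{smaller deviation from the proportional ideal}, the chain is immediate: $N^k_p(\mathbf{x}, A_1) < N^k_p(\mathbf{x}, A_2)$ says precisely that $A_1$'s attribution on $\mathbf{x}$ sits strictly closer to the $N^k_p = 0$ ideal than $A_2$'s, which is what the proposition asserts, and strictness of the inequality carries over to strictness of the comparison. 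I would close by pointing back to the motivating requirement of Definition~\ref{def: proportionality-k for necessity}: the difference between ablating the highest-mass-first prefix and the lowest-mass-first prefix of equal total score is exactly the failure of \qm{same share $\Rightarrow$ same necessity}, so minimizing it is the operational meaning of improving proportionality.

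The main obstacle is not the direction of the inequality but the \emph{well-definedness} of the two prefixes under the equality constraint $\sum_i^{n_1} s_i = \sum_j^{n_2} s_j = k\,S(\mathbf{x}, A)$. Because the pixels are discrete, a given target mass $k\,S(\mathbf{x}, A)$ generally will not be hit exactly by any integer prefix length, so $n_1$ and $n_2$ must be pinned down carefully — either by restricting $k$ to the grid of achievable cumulative sums, or by allowing a fractional contribution from the single boundary pixel that straddles the target mass. I would handle this by fixing $k$ and defining $n_1, n_2$ as the prefix lengths whose cumulative positive mass first reaches $k\,S(\mathbf{x}, A)$ from each end, and by insisting the comparison be read at a common $k$ for both methods. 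A secondary point worth flagging is faithfulness: evaluating the gap only on the two extreme prefixes, rather than over all equal-mass prefix pairs, is what makes $N^k_p$ a single well-defined witness of proportionality violation at level $k$; since these two prefixes are the ones most likely to disagree, the gap between them is a natural and conservative surrogate, and this is the modeling choice the proof would surface rather than attempt to derive.
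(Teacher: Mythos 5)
Your proposal matches the paper's own justification, which is not a formal proof but the ``Explanation of Definition~\ref{def: proportionality-k for necessity}'' paragraph: under the restriction $S_1 = S_2$ the gap between the highest-first and lowest-first perturbation responses is taken as the operational measure of deviation from the proportional ideal, so a smaller $N^k_p$ is by construction read as better proportionality. Your additional care about the well-definedness of $n_1, n_2$ under the discrete equal-mass constraint goes beyond what the paper states, but the core argument is the same.
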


A similar requirement for attribution method is \textit{completeness} discussed by
\cite{sundararajan2017axiomatic} and its generalization \textit{sensitivity-n} discussed by
\cite{ancona2017better}.  \textit{completeness} requires the sum of total attribution scores to be
equal to the change of output compared to a baseline input, and \textit{sensitivity-n} requires any
subset of $n$ pixels whose summation of attribution scores should be equal to the change of output
compared to the baseline if pixels in that subset are removed. When $n$ is the total number of
pixels in the input image, \textit{sensitivity-n} reduces to \textit{completeness}. The
relationships between \textit{sensitivity-n} and Proportionality-k for Necessity are discussed as
follows:

\begin{proposition}
If an attribution method $A$ satisfies both sensitivity-$n_1$ and sensitivity-$n_2$,
then $N^k_p(\mathbf{x}, A) = 0$ under the condition if $\sum^{n_1}_i s_i = \sum^{n_2}_j s_j = k S(\mathbf{x}, A), s_i \in \pi^+_A(\mathbf{x}), s_j \in \hat{\pi}^+_A(\mathbf{x}), k\in[0, 1]$ \label{pro:1}, but not vice versa.
\end{proposition}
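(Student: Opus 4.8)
The plan is to prove the forward implication by unfolding the definition of sensitivity-$n$ into the paper's perturbation notation, and then to dispatch the converse with an explicit toy counterexample.

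First I would restate sensitivity-$n$ for the two prefix subsets that actually appear inside $N^k_p$. Let $S_1$ be the set of the first $n_1$ pixels of $\pi^+_A(\mathbf{x})$ and $S_2$ the first $n_2$ pixels of $\hat{\pi}^+_A(\mathbf{x})$; by construction $|S_1| = n_1$ and $|S_2| = n_2$. Recalling from Section \ref{sec:notation} that $R_{n_1}(\mathbf{x}, \pi^+_A(\mathbf{x}))$ is exactly the model output after the pixels of $S_1$ are ablated to the baseline (and similarly for $R_{n_2}$ and $S_2$), sensitivity-$n_1$ applied to $S_1$ reads $\sum_{i=1}^{n_1} s_i = y_0 - R_{n_1}(\mathbf{x}, \pi^+_A(\mathbf{x}))$, and sensitivity-$n_2$ applied to $S_2$ reads $\sum_{j=1}^{n_2} s_j = y_0 - R_{n_2}(\mathbf{x}, \hat{\pi}^+_A(\mathbf{x}))$. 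I would then substitute the proportionality side condition $\sum_{i=1}^{n_1} s_i = \sum_{j=1}^{n_2} s_j = k\,S(\mathbf{x}, A)$: both perturbed outputs are pinned to the same value, $R_{n_1}(\mathbf{x}, \pi^+_A(\mathbf{x})) = y_0 - k\,S(\mathbf{x}, A) = R_{n_2}(\mathbf{x}, \hat{\pi}^+_A(\mathbf{x}))$, so their absolute difference, which is precisely $N^k_p(\mathbf{x}, A)$ by Definition \ref{def: proportionality-k for necessity}, vanishes. This half is essentially definition chasing; the only care needed is to ensure the sign convention in sensitivity-$n$ (output drop relative to the original output $y_0$) matches the direction of perturbation used in the definition of $R_i$.

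For the converse (``not vice versa'') I would exhibit a single attribution method with $N^k_p = 0$ but failing sensitivity-$n$, reusing the toy model $M(\mathbf{x}) = \max(x_1, x_2)$ with $x_1 = x_2 = x_3 = 1$ and baseline $b = 0$ from Example \ref{example:max}. Taking method $A_3$ (scores $2/3, 1/3, 0$), the positive-attribution set is $\{x_1, x_2\}$, so with $n_1 = n_2 = 2$ and $k = 1$ the length-two prefixes of $\pi^+_A(\mathbf{x})$ and $\hat{\pi}^+_A(\mathbf{x})$ coincide as sets and $N^1_p = 0$ holds trivially; yet sensitivity-$2$ fails, since the size-two subset $\{x_1, x_3\}$ has attribution sum $2/3$ while its actual output drop is $y_0 - \max(0,1) = 0$. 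The structural reason behind this gap, which I would state explicitly, is that $N^k_p$ constrains only the two particular prefix subsets tied to the attribution ordering and merely forces their perturbed outputs to agree, whereas sensitivity-$n$ demands the attribution-sum/output-drop identity for every subset of the given size and pins each output to $y_0 - k\,S(\mathbf{x},A)$; matching on two subsets is far weaker than matching on all of them at an absolute value.

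The main obstacle is not the forward direction but making the converse counterexample genuinely respect all side conditions of Definition \ref{def: proportionality-k for necessity} at once, namely that the chosen $n_1, n_2, k$ are jointly feasible given the integrality of the prefix lengths and the fixed attribution scores. I would verify this feasibility directly on the toy instance; if a non-degenerate witness with $n_1 \neq n_2$ is preferred over the coinciding-prefix case above, I would search for attribution scores whose distinct prefixes still produce equal perturbed outputs while some off-ordering subset of the relevant size breaks sensitivity, confirming that the implication is strictly one-directional.
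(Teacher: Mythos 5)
Your forward direction is essentially the paper's own argument: apply sensitivity-$n_1$ and sensitivity-$n_2$ to the two prefix subsets appearing in $N^k_p$, substitute the side condition $\sum_{i}^{n_1} s_i = \sum_{j}^{n_2} s_j = k\,S(\mathbf{x},A)$, and conclude the absolute difference vanishes. You are in fact somewhat more careful than the paper here: the appendix proof writes $\sum^{n_1} s_i = R(\mathbf{x},\pi)$, conflating the perturbed output with the output \emph{change}, whereas you correctly route through $R_{n_1}(\mathbf{x},\pi^+_A(\mathbf{x})) = y_0 - k\,S(\mathbf{x},A) = R_{n_2}(\mathbf{x},\hat{\pi}^+_A(\mathbf{x}))$, which is what the definition of $R_i$ in Section~\ref{sec:notation} actually requires. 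The substantive difference is the converse: the proposition asserts ``but not vice versa,'' and the paper's proof never addresses it, while you supply a concrete witness ($A_3$ on the $\max(x_1,x_2)$ toy model, $n_1=n_2=2$, $k=1$, with the off-ordering subset $\{x_1,x_3\}$ breaking sensitivity-$2$), together with the right structural explanation ($N^k_p$ constrains only two ordering-tied prefixes and only forces their outputs to agree, whereas sensitivity-$n$ quantifies over all subsets of size $n$ and pins each to an absolute value). Your flagged concern about joint feasibility of $n_1, n_2, k$ is legitimate, but the $k=1$ instance you chose is feasible and suffices; the coinciding-prefix degeneracy does not weaken the counterexample since the converse implication must hold for every admissible instantiation. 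Net: same route as the paper on the implication, strictly more complete on the clause the paper leaves unproved.
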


The proof for Proposition \ref{pro:1} and can be found in Appendix 1. We further contrast our method with \textit{sensitivity-n} in Section \ref{sec:related work}. Integrating \textit{proportionality} with all possible shares of attribution scores, we define the Total Proportionality for Necessity (TPN):

\begin{definition}[Total Proportionality for Necessity]
\label{def: TPN}

Given an attribution method A and an input image $\mathbf{x}$, The Total Proportionality for Necessity is measured by
\begin{equation}
     N_p(\mathbf{x}, A) =\frac{1}{ry_0}\int^1_0 N^k_p(\mathbf{x}, A) dk \label{eq:4}
 \end{equation}
where $r=\min \{y_b / R_M(\mathbf{x}, \pi^+_A(\mathbf{x})), 1\}$\footnote{We clip the scores below 0 and add a small positive number $\epsilon$ to the denominator to ensure the numerical stability.}. $y_0$ is used as a normalizer and  $M$ is the total number of elements in $\pi^+_A(\mathbf{x})$, therefore, $r= 1$ if removing all elements in $\pi^+_A(\mathbf{x})$ drops the score to the baseline. Revisit the Section \ref{sec:notation} for notations if needed. 
\end{definition}

\noindent \textbf{Explanation for Definition \ref{def: TPN}}
$N_p(\mathbf{x}, A)$ is the area between two perturbation curves one starting from the pixels with highest attribution scores and the other with a reversed ordering. The difference from Necessity Ordering is that $N_p(\mathbf{x}, A)$ is measured against the share of attribution scores (the value of $k$) instead of the share of pixels in the $N_o(\mathbf{x}, A)$. On the other side, perturbations on non-necessary features may not change the output at all and we penalize an attribution method that guides us to do so with the ratio $r$ compared to the baseline. Generalizing Proposition \ref{property: proportionality-k for necessity}, we argue:
 \begin{proposition}
  An attribution method $A_1$ shows better Total Proportionality for Necessity than method $A_2$ if $N_p(\mathbf{x}, A_1) < N_p(\mathbf{x}, A_2)$   \label{property: total proportionality for necessity} 
 \end{proposition}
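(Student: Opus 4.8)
The plan is to read this proposition as the integral-level counterpart of Proposition~\ref{property: proportionality-k for necessity} and to show that the scalar $N_p$ aggregates the per-share comparisons faithfully, so that ordering methods by $N_p$ is the correct generalization of ordering them by $N^k_p$ at each fixed share. First I would recall from Definition~\ref{def: TPN} that
\begin{equation}
  N_p(\mathbf{x}, A) = \frac{1}{r y_0}\int_0^1 N^k_p(\mathbf{x}, A)\, dk ,
\end{equation}
where for each $k$ the integrand $N^k_p(\mathbf{x}, A)\ge 0$ is the absolute gap between the forward (highest-attribution-first) and reverse (lowest-attribution-first) perturbation outputs evaluated at the indices $n_1, n_2$ that spend exactly the share $k$ of the total positive attribution mass; note that $n_1$ and $n_2$ are themselves functions of $k$ and of $A$.

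Next I would invoke Proposition~\ref{property: proportionality-k for necessity}, which already certifies that, at each fixed share $k$, a smaller $N^k_p$ is by definition better Proportionality-k for Necessity. The key step is then to argue that integrating these nonnegative per-share discrepancies over $k\in[0,1]$ preserves their ordering in aggregate: because $N^k_p\ge 0$, the map $A\mapsto\int_0^1 N^k_p(\mathbf{x}, A)\,dk$ is monotone in the integrand, so pointwise domination $N^k_p(\mathbf{x}, A_1)\le N^k_p(\mathbf{x}, A_2)$ for all $k$ forces $\int_0^1 N^k_p(\mathbf{x}, A_1)\,dk\le\int_0^1 N^k_p(\mathbf{x}, A_2)\,dk$. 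A smaller total discrepancy therefore means better proportionality accumulated over every admissible share, which is exactly the content intended by ``better Total Proportionality for Necessity.''

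The only delicate point, and what I expect to be the main obstacle, is the method-dependent normalizer $r=\min\{y_b/R_M(\mathbf{x}, \pi^+_A(\mathbf{x})),1\}$, which differs between $A_1$ and $A_2$ and so enters the comparison of the final scalars alongside the integral. I would dispose of it by noting that, after the $\epsilon$-regularization of the footnote in Definition~\ref{def: TPN}, the denominator $r y_0$ is strictly positive, so each $N_p$ is a well-defined nonnegative scalar and the strict inequality $N_p(\mathbf{x}, A_1)<N_p(\mathbf{x}, A_2)$ is meaningful; one then only has to check that the penalty $r$ encodes (it shrinks toward $y_b/R_M$ precisely when ablating the positively attributed pixels fails to drive the score to the baseline, inflating $1/(r y_0)$) points in the same direction as the discrepancy it scales, so that a worse necessity ordering is reinforced rather than masked. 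Since the proposition is phrased directly in terms of these final scalars, the statement is then definitional: given the monotone-aggregation argument, declaring the method with the smaller $N_p$ to have better total proportionality is consistent with the per-share criterion, exactly paralleling how Propositions~\ref{property: NO} and \ref{property: proportionality-k for necessity} operationalize their respective orderings.
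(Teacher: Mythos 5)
The paper offers no proof of this proposition at all: it is introduced with ``Generalizing Proposition~\ref{property: proportionality-k for necessity}, we argue,'' and the only supporting material is the informal ``Explanation for Definition~\ref{def: TPN}.'' In other words, the statement functions as a definitional stipulation of what ``better Total Proportionality for Necessity'' means, and your closing observation that ``the statement is then definitional'' is exactly the paper's (implicit) position. Your proposal is therefore not wrong, but it does more work than the paper and the extra work deserves two caveats. First, your monotone-aggregation step proves the wrong direction for certifying anything: pointwise domination $N^k_p(\mathbf{x},A_1)\le N^k_p(\mathbf{x},A_2)$ for all $k$ does imply $\int_0^1 N^k_p(\mathbf{x},A_1)\,dk\le\int_0^1 N^k_p(\mathbf{x},A_2)\,dk$, but the proposition's hypothesis is only the scalar inequality $N_p(\mathbf{x},A_1)<N_p(\mathbf{x},A_2)$, which does not recover per-share superiority; so the integral comparison cannot be ``derived'' from Proposition~\ref{property: proportionality-k for necessity}, only motivated by it. Second, your treatment of the normalizer $r$ is the one place where a genuine argument would be needed and is not supplied: since $r$ depends on the method, two methods with identical integrals $\int_0^1 N^k_p\,dk$ can be ranked either way by $N_p$ depending on how far each drives the score toward the baseline, and ``checking that the penalty points in the same direction'' is asserted rather than shown. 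Neither caveat contradicts the paper, which simply decrees the comparison; but if you intend your writeup as a proof rather than a restatement of the definition, the $r$ issue is the gap to close.
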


Under the similar construction, we have the following definition of Proportionality-k for Sufficiency and Total Proportionality for Sufficiency (TPS):

\begin{definition}[Proportionality-k for Sufficiency]
Consider two positive number $n_1, n_2$ and an attribution method A. Use notations in Section \ref{sec:notation} and let $\hat{\pi}^+_A(\mathbf{x})$ be a reversed ordering of $\pi^+_A(\mathbf{x})$. Proportionality-k for Sufficiency is measured by
\begin{equation}
    S^k_p(\mathbf{x}, A) = |R'_{n_1}(\mathbf{x}_b, \pi^+_A\mathbf{x})) - R'_{n_2}(\mathbf{x}_b, \hat{\pi}^+_A(\mathbf{x}))| \label{eq:7}
\end{equation}
under the condition $\sum^{n_1}_i s_i = \sum^{n_2}_j s_j = k S(\mathbf{x}, A), s_i \in \pi^+_A(\mathbf{x}), s_j \in \hat{\pi}^+_A(\mathbf{x}), k\in[0, 1]$. $R'_i(\mathbf{x}, \pi)$ reuses the definition in \eqref{eq:So}; $S( \mathbf{x}, A)$ is the sum of total positive attribution scores.
\end{definition}

We want the difference $S^k_p(\mathbf{x}, A)$ as small as possible since the same share of attribution scores should reflect same sufficiency. Therefore, we have the following proposition:
 \begin{proposition}
  An attribution method $A_1$ shows better Proportionality-k for Sufficiency than method $A_2$ if $S^k_p(\mathbf{x}, A_1) < S^k_p(\mathbf{x}, A_2)$  \label{property: proportionality-k for sufficiency}
 
 \end{proposition}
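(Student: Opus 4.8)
The plan is to recognize that this proposition, exactly like its Necessity counterpart (Proposition~\ref{property: proportionality-k for necessity}), is a \emph{monotonicity} statement tying the qualitative ranking ``better Proportionality-k for Sufficiency'' to the scalar $S^k_p$, rather than a claim with independent quantitative content. I would therefore prove it by first pinning down the ideal behavior that $S^k_p$ is designed to detect, then showing that $S^k_p$ is a nonnegative deviation from that ideal, so that ordering two methods by $S^k_p$ is precisely ordering them by how faithfully they realize sufficiency-proportionality. The whole argument should be the mirror image of the Necessity case, obtained by swapping removal-from-$\mathbf{x}$ for addition-to-$\mathbf{x}_b$ and the curves $R_i$ for $R'_i$.

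Concretely, the first step is to fix a share $k \in [0,1]$ and examine the two prefixes selected by the constraint $\sum^{n_1}_i s_i = \sum^{n_2}_j s_j = k S(\mathbf{x}, A)$: the top-scored pixels from $\pi^+_A(\mathbf{x})$ and those from the reversed ordering $\hat{\pi}^+_A(\mathbf{x})$. By the proportionality principle stated before Definition~\ref{def: proportionality-k for necessity}---that the same share of attribution should account for the same sufficiency---a perfectly proportional method yields $R'_{n_1}(\mathbf{x}_b, \pi^+_A(\mathbf{x})) = R'_{n_2}(\mathbf{x}_b, \hat{\pi}^+_A(\mathbf{x}))$ for every such matched pair, hence $S^k_p(\mathbf{x}, A) = 0$. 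Since $S^k_p$ is defined as an absolute value, we always have $S^k_p(\mathbf{x}, A) \geq 0$, with equality attained exactly at this ideal. The metric is thus a distance-to-ideal, and a smaller value means each attribution share is distributed more faithfully in the additive-to-baseline regime. This immediately gives the implication: if $S^k_p(\mathbf{x}, A_1) < S^k_p(\mathbf{x}, A_2)$, then $A_1$ deviates less from the ideal and is the better of the two, which is the claim.

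The main obstacle is not the monotonic implication itself but the well-definedness of the two endpoints under the share constraint. Because attribution scores are discrete, the cumulative partial sums taken from each end may not land exactly on $k S(\mathbf{x}, A)$, so $n_1$ and $n_2$---and hence the quantities $R'_{n_1}$ and $R'_{n_2}$ being compared---require a canonical choice, for instance the largest prefix whose cumulative score does not exceed $k S(\mathbf{x}, A)$, or a linear interpolation between adjacent prefixes. One must also check that the comparison is taken at the same $k$ for both $A_1$ and $A_2$, so that ``share'' serves as a common yardstick across methods. Once this tie-breaking convention is fixed---inherited verbatim from the Necessity construction and from Definition~\ref{def: TPN}---the sufficiency argument carries through with no new estimate, completing the proof.
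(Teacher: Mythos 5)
Your proposal is correct and matches the paper's own justification: the paper offers no formal proof of this proposition, treating it as a stipulation that follows from the preceding remark that ``the same share of attribution scores should reflect same sufficiency,'' so that $S^k_p$ is a nonnegative deviation from the ideal and smaller is better --- exactly your distance-to-ideal argument. Your additional observation about the discrete choice of $n_1, n_2$ under the share constraint is a fair point the paper leaves implicit, but it does not change the substance.
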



\begin{definition}[Total Proportionality for Sufficiency]
Given an attribution method A and an input image $\mathbf{x}$, The Total Proportionality for Sufficiency is measured by
\begin{equation}
    S_p(\mathbf{x}, A) = \frac{1}{r'y_0} \int^1_0 S^k_p(\mathbf{x}, A) dk \label{eq:10}
\end{equation}where
where $r' =\min\{R'_M(\mathbf{x}, \pi^+_A(\mathbf{x})) / y_0, 1\}$. $y_0$ is used as a normalizer and  $M$ is the total number of elements in $\pi^+_A(\mathbf{x})$, therefore, $r'= 1$ if adding all elements in $\pi^+_A(\mathbf{x})$ increases the score to the original output. Refer to Section \ref{sec:notation} and \ref{sec:order-sufficiency} for details about the notation.
\end{definition}

Similarly, $S_p(\mathbf{x}, A)$ is the area between curves of model's output change by adding pixels to a baseline input with the highest attribution scores first or by the lowest first. The ratio $r'$ penalizes the false postive situation when adding all pixels with positive scores does not increase the output significantly.
Finally, we have

 \begin{proposition}
   An attribution method $A_1$ shows better Total Proportionality for Sufficiency than another method $A_2$ if $S_p(\mathbf{x}, A_1) < S_p(\mathbf{x}, A_2)$  \label{property: total proportionality for sufficiency}

 \end{proposition}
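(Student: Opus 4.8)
The plan is to argue that the scalar $S_p(\mathbf{x}, A)$ is a direction-consistent aggregation of the per-share quantities $S^k_p(\mathbf{x}, A)$, so that comparing the two numbers $S_p(\mathbf{x}, A_1)$ and $S_p(\mathbf{x}, A_2)$ correctly extends, to the total measure, the per-share ordering already fixed by Proposition~\ref{property: proportionality-k for sufficiency}. This mirrors the justification of Proposition~\ref{property: total proportionality for necessity} on the necessity side, so I would follow the same template.

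First I would fix the input $\mathbf{x}$ and record two facts about the integrand. By \eqref{eq:7}, $S^k_p(\mathbf{x}, A) = |R'_{n_1}(\mathbf{x}_b, \pi^+_A(\mathbf{x})) - R'_{n_2}(\mathbf{x}_b, \hat{\pi}^+_A(\mathbf{x}))| \ge 0$ for every $k \in [0,1]$, and by Proposition~\ref{property: proportionality-k for sufficiency} a smaller value of $S^k_p$ at a fixed share $k$ means the method distributes that share of attribution more proportionally to the output it recovers when the corresponding pixels are added to the baseline. Thus at each $k$ the reading \emph{smaller is better} is already established. Next I would examine the normalizer in \eqref{eq:10}: since $y_0 = f(\mathbf{x})$ is the positive score of the predicted class and $r' = \min\{R'_M(\mathbf{x}, \pi^+_A(\mathbf{x}))/y_0, 1\} \in (0,1]$ (with the small $\epsilon$ added to the denominator guaranteeing positivity), the factor $r' y_0 > 0$. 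Hence $S_p(\mathbf{x}, A) \ge 0$ and, because dividing by a positive constant preserves order, the sign of the integrand is carried through to the integral.

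Consequently, if one method satisfies $S^k_p(\mathbf{x}, A_1) \le S^k_p(\mathbf{x}, A_2)$ for (almost) every $k$, monotonicity of the integral together with positivity of the normalizer gives $S_p(\mathbf{x}, A_1) \le S_p(\mathbf{x}, A_2)$; this is exactly the statement that pointwise dominance in Proposition~\ref{property: proportionality-k for sufficiency} implies dominance in the total measure, which is the consistency needed to license the word "better."

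The main obstacle is conceptual rather than computational: $S_p$ is not merely a pointwise bound but a single number obtained after \emph{per-method} normalization by $r' y_0$, and the two methods are normalized by different factors $r'_1 y_0$ and $r'_2 y_0$. I would therefore argue that each $S_p(\mathbf{x}, A)$ is an intrinsic, self-normalized measure of one method's aggregate violation of sufficiency proportionality, namely the normalized area between the "highest-score-first" and "lowest-score-first" sufficiency curves, so that the two resulting scalars live on a common scale and their direct comparison is meaningful. When the curves $k \mapsto S^k_p(\mathbf{x}, A_1)$ and $k \mapsto S^k_p(\mathbf{x}, A_2)$ cross, pointwise dominance fails and the only well-defined comparison is precisely the integral one; adopting $S_p(\mathbf{x}, A_1) < S_p(\mathbf{x}, A_2)$ as the criterion is thus the natural integral extension of Proposition~\ref{property: proportionality-k for sufficiency}, completing the justification in the same manner as the corresponding necessity proposition.
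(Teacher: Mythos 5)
The paper does not actually prove this proposition. Like its necessity counterpart (Proposition~\ref{property: total proportionality for necessity}), it is introduced with ``Finally, we have'' as a stipulation of what ``better'' means, and the only justification offered is the informal remark that $S_p(\mathbf{x},A)$ is the area between the highest-score-first and lowest-score-first sufficiency curves, with $r'$ penalizing the false-positive case where adding all positively attributed pixels fails to recover the original output. Your proposal is therefore an elaboration of a definition rather than a reconstruction of an argument the paper contains, but in spirit it tracks the paper's reasoning: smaller per-share discrepancy $S^k_p$ is better by Proposition~\ref{property: proportionality-k for sufficiency}, the integral aggregates this over all shares $k$, and the normalization is meant to put the resulting scalars for different methods on a common scale.

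The one genuinely mathematical step you attempt --- that $S^k_p(\mathbf{x},A_1)\le S^k_p(\mathbf{x},A_2)$ for almost every $k$ implies $S_p(\mathbf{x},A_1)\le S_p(\mathbf{x},A_2)$ by monotonicity of the integral --- does not go through as stated, because the two integrals are divided by \emph{different} constants $r'_1 y_0$ and $r'_2 y_0$: if $r'_1 < r'_2$, the inequality between the unnormalized areas can reverse after normalization. You correctly flag this obstacle, but you resolve it only by asserting that each $S_p$ is ``intrinsic'' and ``self-normalized,'' which is an appeal to the definition rather than an argument. Since the proposition is itself the definition of ``better Total Proportionality for Sufficiency,'' nothing in the paper depends on that implication holding, but you should not present pointwise dominance as entailing the total comparison; the honest statement is simply that the integral criterion is adopted as the aggregate notion of ``better,'' consistent with (but not derivable from) the per-$k$ criterion.
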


In summary, we differentiate and describe the Necessity Ordering (N-Ord) and Sufficiency Ordering (S-Ord) from previous work and propose Total Proportionality for Necessity (TPN) and Total Proportionality for Sufficiency (TPS) as refined evaluation criteria for necessity and sufficiency. We then apply our measurement to explain the prediction results from an image classification task in the rest of the paper.

\section{Evaluation}\label{sec:evaluation}
\subsection{Implementation of Proportionality}
To compute TPN for each single input, we ablate a subset of input features at each time. Different from Ordering test, we do not ablate a certain number of features, instead, we ablate a subset of features with a certain share of attribution scores. The share of attribution scores $k$ goes from 0 to 1. We generate the ablation curve from the features with highest scores first and the features with least highest scores first, and measure the area between these two curves. Optimal TPN will be 0 as discussed in the previous sections. The similar implementation will be applid to TPS. 
\begin{figure}[t]
    \begin{center}
        \includegraphics[width=\linewidth]{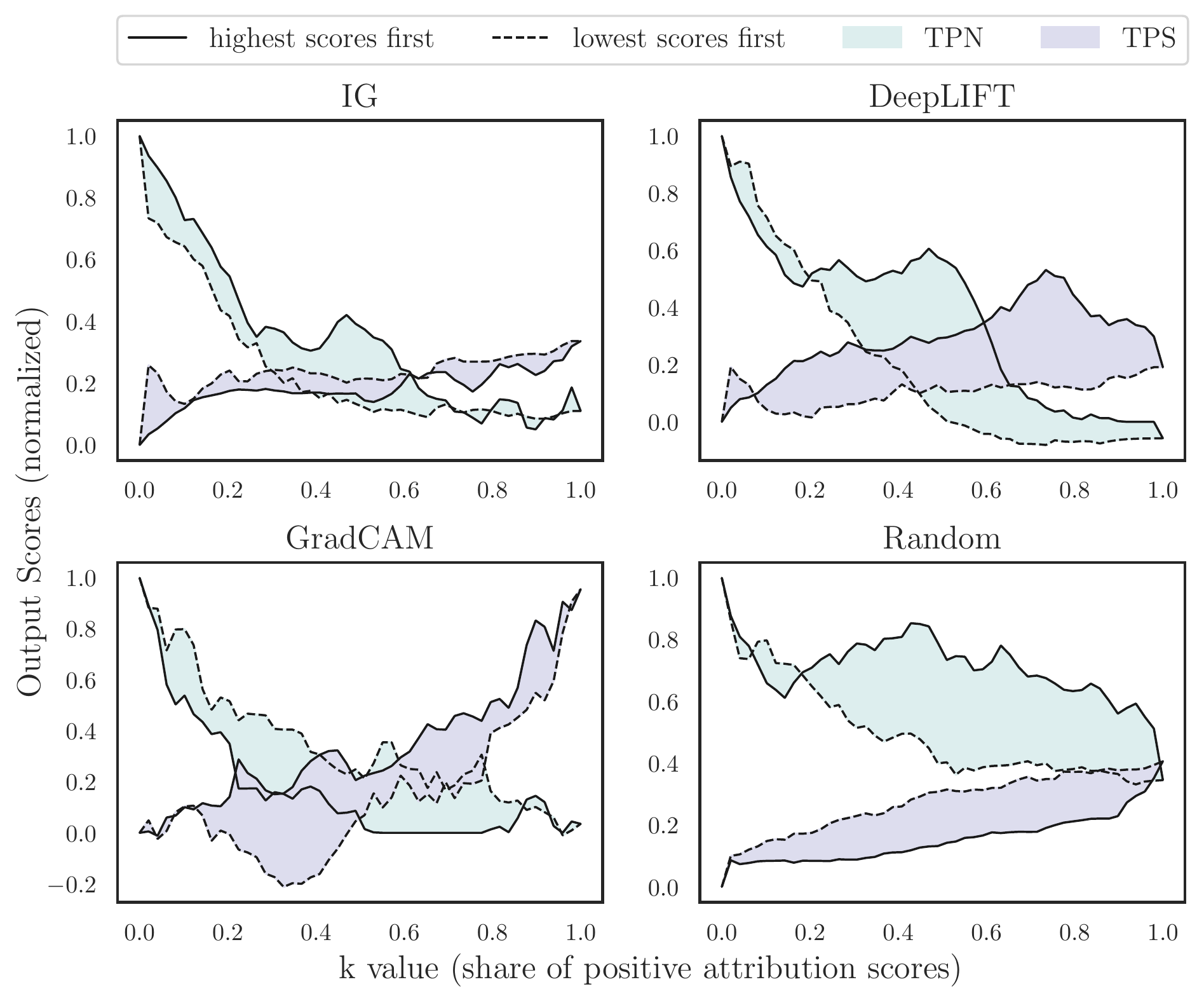}
    \end{center}
  \caption{An example of computation for TPN and TPS with the input in Fig \ref{fig:example_duck}. The area represents the score of TPN and TPS before applying the penalty $r$ and $r'$ for better visulization purpose. We also include a Random method that randomly assigns attribution scores first as a baseline method to compare with . This figure is better viewed in color. }
\label{fig:duck_NpSp}
\end{figure}

\begin{figure}
  \begin{center}
    \includegraphics[width=\linewidth]{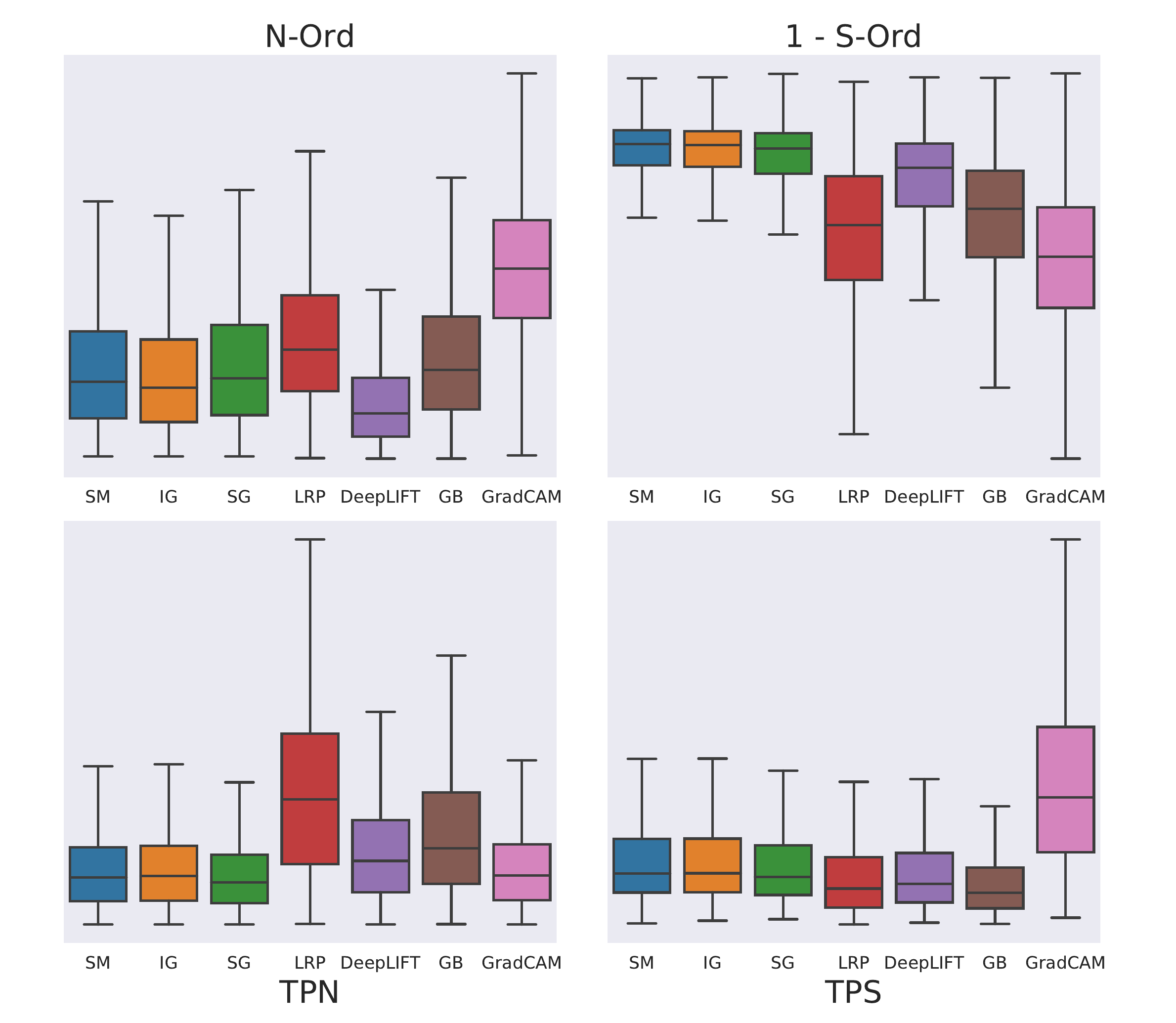}
  \end{center}
  \caption{The boxplot for different attribution methods evaluated with four crietria aforementioned on 9600 images from ImageNet with VGG16 model. Since only the higher S-Ord score indicates better Sufficiency Ordering, we use $1-$ S-Ord to accord with other criteria \textbf{so the lower scores will indicate better performance on all criteria shown above}. }
  \label{fig:heatmap}
\end{figure}

\subsection{Evaluate on the datasets}
\zifan{TODO: review and polish this}
We evaluate N-Ord, S-Ord, TPN and TPS on 9600 images sampled from ImageNet \cite{imagenet_cvpr09} on VGG16 models. We evaluate all the attribution methods motioned in Section \ref{sec:methods} and the implementation detail for the model and attribution methods can be found in the Appendix 3. We show the boxplot in Fig \ref{fig:heatmap}. We plot $1-$ S-Ord instead the original S-Ord score so that that conclusion that lower scores represent better performance holds for all subfigures. On the ImageNet and VGG model, DeepLIFT has shown relatively better Necessity Ordering while GradCAM shows relatively better Sufficiency Ordering. Considering the proportionality, Saliency Map, Integrated Gradient and Smooth Gradient are all showing slightly better proportionality for necessity while LRP and GB are showing slightly better proportionality for sufficiency. But clearly, no evaluated method is significantly better than others on all criteria at the same time.

We analyze these attribution methods based on the result of Fig \ref{fig:heatmap}:
\textbf{Saliency Map} performs not bad in necessity for both ordering and \emph{proportionality} compared to sufficiency. Therefore, the features with highest scores assigned by Saliency Map may not be actually sufficient for the decision making process, \eg the pool is highlighted by Sailency Map in Fig \ref{fig:example_duck} but the model will probably not make a mistake when only the pool is present to it. One possible reason is that the vanishing gradient issue causes the loss of gradient signal. Threfore, \textbf{Integrated Gradient and DeepLIFT} are two means trying to solve the vanishing gradient issue in Saliency Map and both archive better Necessity Ordering compared to the Saliency Map. But they do not show significantly better proportionality in both necessity and sufficiency. The reason behind this we assume is that the \emph{Summation-to-Delta} requirement only guarantees that sum of attribution scores for all features equals to the change of output, while any other share of attribution scores does not cause equivalent change to the output, so the proportionality is not improved. Similar conclusions are also discussed by \emph{sensitivity-n} \cite{ancona2017better} \textbf{Smooth Gradient} shows lower inter-quartile range in both TPN and TPS compared to Saliency Map. Computing the expectation of the Saliency Map of a distribution of inputs does not resolve the possible vanishing gradients issue for each input in the distribution; however, at the same activation unit, \eg ReLU, an input's gradient signal is blocked by flatten negative region but its neighbor's gradient signal can get unblocked. It may help to explain the improvement Smooth Gradient shows in th experiment. \textbf{GradCAM} is probably the best choices one can have for the Sufficiency Ordering regardless of the fact it does poorly in porportionality for the sufficiency -- the attribution scores may not reflect actual sufficiency. The result is not surprising because the upsampling process in GradCAM does not relate to any axiom that guarantees to produce pixel-level proportional scores.

On the contradictory, we can not make instructive comment on the following two attribution methods:
\textbf{Guided Backpropagation} shows better sufficiency on ordering and proportionality compared to the necessity. We consider it as a good method to reveal the sufficient features, however, as \citet{adebayo2018sanity} points out GB lacks fearfulness to the model by behaving poorly in the \emph{sanity check}. Therefore, we leave the understanding of GB as a future work. On the other hand, \textbf{Layer-wise Relevance Propagation} is the one we will not make much strong conclusion as well since there are many rules in LRP and only one of them, $\alpha2\beta1$-LRP (see Appendix \textcolor{red}{II}), is tested. But specifically, for $\alpha2\beta1$-LRP, it shows good sufficiency on both criteria, which increase our confidence to interpret the result of $\alpha2\beta1$-LRP as identifying sufficient features in the input space.

\subsection{Evaluate with one instance}
All the metrics discussed before can be applied to one single input and the interpretation using all winners for each criteria can provide more insights about the model. For example, in Fig \ref{fig:dogexample}, we interpret that the body of a dog is necessary to the \texttt{English springer} class and only providing the body the model may not consider it is a dog, the sufficient feature is its head. Consulting different attributions and interpret with the winners can give more comprehensive understandings. More exmaples are included in Appendix \textcolor{red}{III}. 

\begin{figure}[h]
  \begin{center}
    \includegraphics[width=\linewidth]{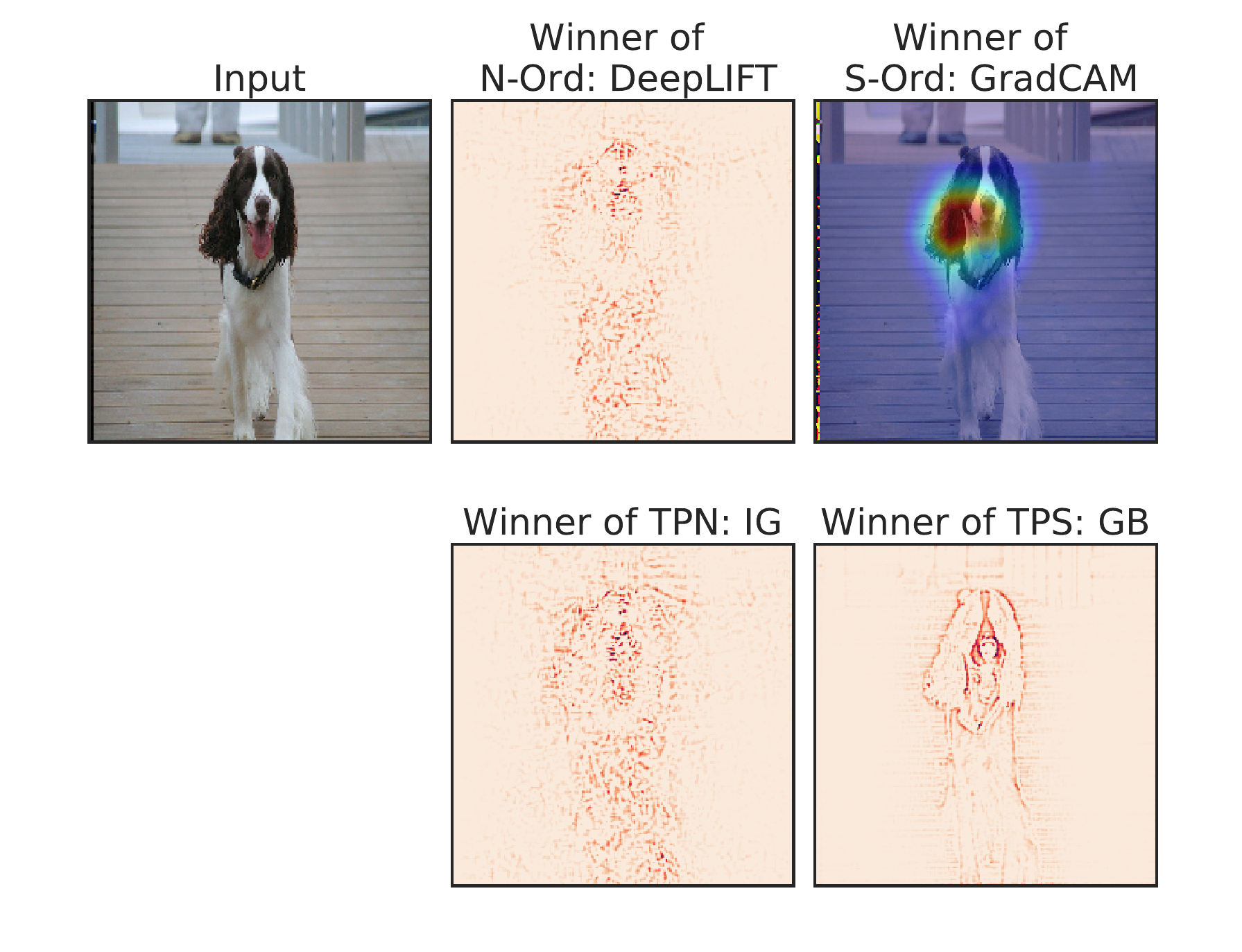}
  \end{center}
  \caption{An example of interpreting the model's predictions with winners on different criteria. }
  \label{fig:dogexample}
\end{figure}




\section{Related Work}
\label{sec:related work}
To the best of our knowledge, we are the first one describing the concepts of necessity and sufficiency in attributions where similar work may only touch the surface of either necessity or sufficiency but not both. Our work is partially motivated by \emph{smallest sufficient region} (SSR) and \emph{smallest destroying region} 
(SDR) \cite{dabkowski2017real} where the authors aim to propose a region that either increase or decrease the model's output most even though SSR and SDR only capture the spatial location in the image but do not incorporate the feature contributions as scores.

We also consider our work as a subset of \textbf{sensitivity} evaluation that how well we can trust an attribution method with its quantification of the feature importance in the input. A close concept is \textit{quantitative input influence} by \citet{7546525} (even though the author does not target on deep neural networks). \textit{sensitivity (a)(b)} \cite{sundararajan2017axiomatic} provides the basis of discussion and \textit{sensitivity-n} \cite{ancona2017better} imposes more strict requirements. The mathematical connection between \textit{proportionality} with \textit{sensitivity-n} is discussed in Section \ref{sec:proportionality}. We discuss the main difference of these two concepts here. \textit{proportionality} approaches the sensitivity from a view that, regardless of the number of pixels, same share of attribution scores should account for same 
change to the output, while \textit{sensitivity-n} requires removing $n$ pixels should change the output by the amount of total attribution scores of that $n$ pixels. \textit{sensitivity-n} only provides \textit{True/False} to an attribution methods, but \textit{proportionality} provides numerical results for comparing different methods under two purposes, the necessity and sufficiency.

\section{Conclusion}
\label{sec: discussion}
\zifan{Update this}

\pxm{TODO: update}


In this work, we summarized existing evaluation metrics for attribution methods and categorized them into two logical concepts, necessity and sufficiency. We then demonstrated realizable criteria to quantify necessity and sufficiency with an analysis focused on ordering and its refinement, proportionality.
We evaluated existing attribution methods against our criteria and listed the best methods for each criteria. We discovered that certain attribution methods excel in necessity or sufficiency, but none is a frequent winner for both. 

The logical concepts of necessity and sufficiency are generally mutually exclusive and our analogues show the same based on our results: no method is universally optimal for both necessity and sufficiency. While this means we cannot endorse one method over others, the techniques we present provide additional interpretability tools to data scientist who can use our measures to select the attribution appropriate to the task at hand. When debugging a model for identifying traffic stop signs, an analyst can select for methods with greater necessity to determine whether the model has learned spurious correlates, e.g., the pole holding up the sign. A ``necessary'' pole would lead to false negatives (stop signs not on poles) while a ``sufficient'' one would only indicate potential false positives (poles without stop signs) which, though also problematic, are not as dangerous as false negatives in this case. The increased basis with which to interpret attribution will hopefully lead to a fuller understanding of model behaviour.


\section*{Acknowledgement}
This work was developed with the support of NSF grant CNS-1704845 as
well as by DARPA and the Air Force Research Laboratory under
agreement number FA8750-15-2-0277. The U.S. Government is authorized
to reproduce and distribute reprints for Governmental purposes not
withstanding any copyright notation thereon. The views, opinions,
and/or findings expressed are those of the author(s) and should not
be interpreted as representing the official views or policies of
DARPA, the Air Force Research Laboratory, the National Science
Foundation, or the U.S. Government. 

We gratefully acknowledge the support of NVIDIA Corporation with the donation of the Titan V GPU used for this research. 


{\small
    \bibliographystyle{ieee_fullname}
    \bibliography{egbib}
}

\section*{Appendix}
\subsection*{Appendix I: Proofs}
\label{app:i}

\noindent\textbf{Nonlinearity}
\citet{ancona2017better} concludes that SM, IG, LRP,DeepLIFT are equivalent for linear models and their proof also applies to SG. We first introduce the following proposition:

\begin{proposition}
All attribution methods mentioned in Sec \ref{sec:background} except GradCAM and Guided Backpropagation are equivalent if the model behaves linearly. 
\end{proposition}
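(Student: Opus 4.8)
The plan is to fix one canonical per-feature attribution for a linear model and then show that every method in the list collapses to it. Write the (possibly multi-layer) linear model as $f(\mathbf{x}) = \mathbf{w}^\top \mathbf{x} + \beta$, which is legitimate since a composition of affine maps is again affine. The single fact that drives the whole argument is that the gradient $\nabla_{\mathbf{x}} f = \mathbf{w}$ is \emph{constant} in $\mathbf{x}$. Since this paper adopts \emph{grad}~$\times$~\emph{input}, the \shortsm attribution for feature $i$ is $w_i x_i$; I would take this as the reference form and, fixing the baseline $\mathbf{x}_b = \mathbf{0}$, show that each remaining method equals $w_i x_i$ (or $w_i(x_i - x_{b,i})$ for a general baseline, since the bias $\beta$ cancels in $f(\mathbf{x}) - f(\mathbf{x}_b)$).

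First I would dispatch the path- and baseline-based methods. For \shortig the straight-line integral of the gradient is $\int_0^1 \nabla f(\mathbf{x}_b + \alpha(\mathbf{x} - \mathbf{x}_b))\, d\alpha = \mathbf{w}$ because the integrand is the constant $\mathbf{w}$, so $\text{IG}_i = (x_i - x_{b,i}) w_i$, which is exactly $w_i x_i$ at $\mathbf{x}_b = \mathbf{0}$. For \shortdl under the Rescale rule the multiplier of a linear unit is $\Delta f_i / \Delta x_i = w_i$, giving the identical $(x_i - x_{b,i}) w_i$. For \shortsg the score averages $\nabla f(\mathbf{x} + \boldsymbol{\epsilon})$ over Gaussian noise $\boldsymbol{\epsilon}$; since the gradient is the constant $\mathbf{w}$ for every realization, the average is $\mathbf{w}$ and \emph{grad}~$\times$~\emph{input} again returns $w_i x_i$. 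These three steps are routine once constancy of the gradient is in hand: \citet{ancona2017better} already records the \shortig and \shortdl cases, and the \shortsg case follows from the same observation.

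The delicate step, and the one I expect to be the main obstacle, is \shortlrp. Unlike the others it is defined by a backward redistribution rule rather than a closed-form gradient expression, and it comes in a family of variants ($\epsilon$-, $z^+$-, $\alpha\beta$-rules) whose treatment of the bias $\beta$ and of negative contributions differ. I would pin down one rule (the plain $z$-rule, equivalently the $\alpha\beta$ rule with $\beta = 0$), drop or absorb the bias, and verify on a single linear layer that the conservation-based redistribution assigns relevance $R_i = w_i x_i$, so that \shortlrp too matches the canonical form. I would also make the claim precise at the outset by defining \qm{equivalent} to mean identical per-feature scores at the shared baseline $\mathbf{x}_b=\mathbf{0}$ (and, in particular, identical necessity and sufficiency orderings of Section~\ref{sec:order}); this normalization is needed because the methods use different baseline and bias conventions that agree only after it. Finally I would note why \shortgc and \shortgb are excluded: \shortgb alters the ReLU backward pass and so does not reduce to $\mathbf{w}$ even when $f$ is affine in its inputs, while \shortgc forms its map from convolutional activations followed by spatial upsampling, which is not a per-pixel function of the form $w_i x_i$.
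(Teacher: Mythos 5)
Your proof is correct and rests on the same key fact as the paper's own proof — that the gradient of a linear model is the constant $\mathbf{w}$ — and your SmoothGrad step (the expectation of a constant gradient over Gaussian noise is that constant, hence equal to the Saliency Map) is exactly the computation the paper carries out. The only difference is packaging: the paper cites \citet{ancona2017better} for the SM/IG/DeepLIFT/LRP equivalences and proves only the SmoothGrad case itself, whereas you derive all of them explicitly from the constancy of the gradient, which makes your version self-contained and usefully flags that the LRP claim is variant-dependent (it holds for the $z$/$\epsilon$-rule with absorbed biases, not automatically for the $\alpha 2\beta 1$ rule used in the paper's experiments), a subtlety the paper's citation glosses over.
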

\begin{proof}
As the Proposition 4 and Conclusion 6 in \citet{ancona2017better} prove that Saliency Map, Integrated Gradient, DeepLIFT and LRP are equivalent for a linear model, we just need to prove SmoothGrad is equivalent to Saliency Map if the model is linear. 

If a model behaves linearly, we can express the output score $y_c$ for class $c$ as a linear combination such that $y_c = \mathbf{w}^\top_c \mathbf{x} + b_c$. Then the SmoothGrad $S(\mathbf{x})_c$ is 
\begin{equation}
\begin{aligned}
 S(\mathbf{x})_c &= \mathbb{E}_{\boldsymbol{\epsilon}\sim \mathcal{N}(\mathbf{0}, \mathbf{I}\lambda)} \frac{\partial[ \mathbf{w}^\top_c (\mathbf{x} + \boldsymbol{\epsilon}) + b_c]}{\partial \mathbf{x}} \\
        &= \mathbb{E}_{\boldsymbol{\epsilon}\sim \mathcal{N}(\mathbf{0}, \mathbf{I}\lambda)}\mathbf{w}_c\\
        & = \mathbf{w}_c = \frac{\partial y_c}{\partial \mathbf{x}} \quad(\text{Saliency Map})
\end{aligned}
\end{equation}
\end{proof}

\noindent\textbf{Proof to Proposition 4}
\textit{If an attribution method $A$ satisfies both sensitivity-$n_1$ and sensitivity-$n_2$,
then $N^k_p(\mathbf{x}, A) = 0$ under the condition if $\sum^{n_1}_i s_i = \sum^{n_2}_j s_j = k S(\mathbf{x}, A), s_i \in \pi^+_A(\mathbf{x}), s_j \in \hat{\pi}^+_A(\mathbf{x}), k\in[0, 1]$ \label{pro:1}, but not vice versa}.

\begin{proof}
If A satisfies sensitivity-$n_1$, for any given ordered subset $\pi$, we have $$\sum^{n_1}s_i = R(\mathbf{x},\pi)$$ 
Same thing happens to $n_2$ if A satisfies sensitivity-$n_2$. Under the condition if $\sum^{n_1}_i s_i = \sum^{n_2}_j s_j = k S(\mathbf{x}, A), s_i \in \pi^+_A(\mathbf{x}), s_j \in \hat{\pi}^+_A(\mathbf{x}), k\in[0, 1]$, 
\begin{equation}
\begin{aligned}
    N^k_p(\mathbf{x}, A) &= |R(\mathbf{x},\pi_A(\mathbf{x}) - R(\mathbf{x},\pi^+_A(\mathbf{x})|\\
    &=|\sum^{n_1}_i s_i - \sum^{n_2}_j s_j|\\
    &=|k S(\mathbf{x}, A) - k S(\mathbf{x}, A)| = 0
\end{aligned}   
\end{equation}

\end{proof}

\newpage
\subsection*{Appendix II: Implementation Details}
\label{implementaiton detail}

\subsubsection*{Models}

We evaluate N\_Ord, S\_Ord for all attribution methods mentioned in Section \ref{sec:background}. We evaluate on 9600 images from ImageNet \cite{imagenet_cvpr09} with pre-trained on VGG16\cite{simonyan2014deep}. 

\subsubsection*{Attribution Methods}
\noindent \textbf{Saliency Map}

As discussed in Sec \ref{sec:background}, we use \texttt{grad} $\times$ \texttt{input} to represent the Saliency Map, instead of the vanilla gradient.

\noindent \textbf{Integrated Gradient}

We use the black image as the baseline for all images and we use the 50 samples on the linear path from the baseline to the input. 

\noindent \textbf{Smooth Gradient}

As discussed in Sec \ref{sec:background}, we use \texttt{smooth\_grad} $\times$ \texttt{input} to represent the Smooth Gradient. We pick a noise level of 20 $\%$ as it appears to be the best parameter in its original paper \cite{smilkov2017smoothgrad}. We randomly sample 50 points from the Gaussian distribution for the aggregation. 

\noindent \textbf{DeepLIFT}

We use the black image as the baseline for all images and we use the RevealCancel rule for DeepLIFT \footnote{We use the release code on https://github.com/kundajelab/deeplift}

\noindent \textbf{LRP}

We use the implementation of LRP-$\alpha2\beta1$ with generalization tricks mentioned by \citet{Montavon_2018} who argues this rule is better for image explanations. 

\noindent \textbf{Guided Backpropagation}

To implement Guided Backpropagation, we modify the ReLU activation in the network to filter out the negative gradient in tensorflow.
\begin{lstlisting}
@ops.RegisterGradient("GuidedBackProp")
    def _GuidedBackProp(op, grad):
        dtype = op.inputs[0].dtype
        return grad * tf.cast(grad > 0., dtype) * \
                tf.cast(op.inputs[0] > 0., dtype)
\end{lstlisting}

\noindent \textbf{GradCAM}

We use the last convolutional layer to compute the GradCAM for all images. 


\section*{Appendix III}
More examples of evaluating each images with TPN and TPS are shown in Fig \ref{fig:demonstration}
\begin{figure*}[t]
    \begin{center}
        
        \begin{tikzpicture}
        
        \node[] at (0,0) {
            \includegraphics[width=0.9\linewidth]{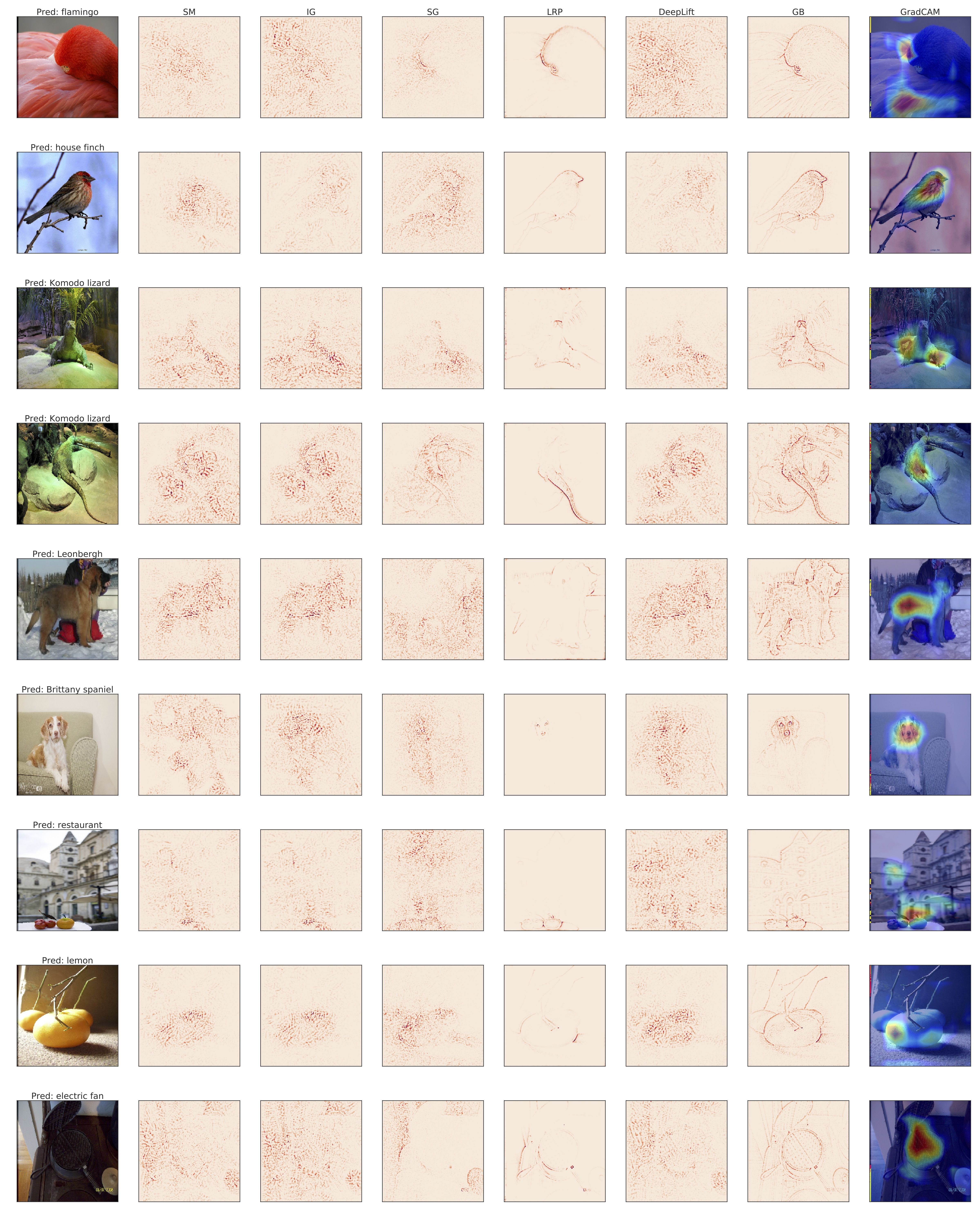}
         };
        
        
         
         \node[] at (2.31in,3.15in) {\LARGE {\color{red} $\checkmark$ }}; 
         \node[] at (1.53in,2.29in) {\LARGE {\color{red} $\checkmark$ }}; 
         \node[] at (-0.79in,1.43in) {\LARGE {\color{red} $\checkmark$ }}; 
         \node[] at (-1.52in,0.57in) {\LARGE {\color{red} $\checkmark$ }}; 
         \node[] at (-0.01in,-0.29in) {\LARGE {\color{red} $\checkmark$ }}; 
         \node[] at (-1.52in,-1.15in) {\LARGE {\color{red} $\checkmark$ }}; 
         \node[] at (2.31in,-2.01in) {\LARGE {\color{red} $\checkmark$ }}; 
         \node[] at (1.53in, -2.87in) {\LARGE {\color{red} $\checkmark$ }}; 
         \node[] at (2.31in,-3.73in) {\LARGE {\color{red} $\checkmark$ }}; 
         
        \node[] at (2.24in,3.15in) {\LARGE {\color{blue} $\checkmark$ }}; 
        \node[] at (2.31in,2.29in) {\LARGE {\color{blue} $\checkmark$ }}; 
        \node[] at (3.08in,1.43in) {\LARGE {\color{blue} $\checkmark$ }}; 
        \node[] at (0.75in,0.57in) {\LARGE {\color{blue} $\checkmark$ }}; 
        \node[] at (2.31in,-0.29in) {\LARGE {\color{blue} $\checkmark$ }}; 
        \node[] at (-0.01in,-1.15in) {\LARGE {\color{blue} $\checkmark$ }}; 
        \node[] at (2.31in,-2.01in) {\LARGE {\color{blue} $\checkmark$ }}; 
        \node[] at (-0.01in,-2.87in) {\LARGE {\color{blue} $\checkmark$ }}; 
        \node[] at (-0.01in,-3.73in) {\LARGE {\color{blue} $\checkmark$ }}; 
        \end{tikzpicture}
        
    \end{center}
   \caption{More visualizations of different attribution methods. Red checks mark the winner of Total Proportionality for Necessity and blue checks mark the winner of Total Proportionality for Sufficiency}
   \label{fig:demonstration}
\end{figure*}
\end{document}